\documentclass[11pt,letterpaper]{article}

\usepackage[utf8]{inputenc}
\usepackage{geometry}
\usepackage{amsmath,amssymb,amsfonts,amsthm}
\usepackage{graphicx}
\usepackage{booktabs}
\usepackage{tabularx}
\usepackage{natbib}
\usepackage{hyperref}
\usepackage{cleveref}
\usepackage{color}
\usepackage{titlesec}
\usepackage[font=footnotesize,labelfont=bf]{caption}
\newlength{\tablecaptiongap}
\AtBeginDocument{\begingroup\footnotesize\global\tablecaptiongap=.7\baselineskip\endgroup}
\newcommand{\papertablecaption}[1]{\setlength{\parskip}{0pt}\captionsetup{position=top,aboveskip=0pt,belowskip=0pt}\caption{#1}\par\nobreak\vskip\tablecaptiongap}
\newcommand{\tablesubheading}[1]{\par\noindent\strut\textsc{#1}\par\nobreak\vskip.7\baselineskip\nointerlineskip}

\title{\textbf{GeoCo-SAVi: Geometry-Consistent Slot Attention for\\ Explicitly Editable Object Representations}}
\author{\textbf{Haoxiang Huang}$^{1,\star}${\normalfont,}\enspace
  \textbf{Zhekai Wang}$^{2,\star}${\normalfont,}\enspace
  \textbf{Xiang Liu}$^{3}${\normalfont,}\enspace
  \textbf{Shuwei Wang}$^{4}${\normalfont,}\enspace
  \textbf{Yiwen Sun}$^{5}${\normalfont,}\\[0.47em]
  \textbf{Zhiwei Yu}$^{6}${\normalfont,}\enspace
  \textbf{Jingheng Ma}$^{3}${\normalfont,}\enspace
  \textbf{Sen Cui}$^{3,\dagger,\ddagger}${\normalfont,}\enspace
  \textbf{Changshui Zhang}$^{3,\dagger}$\\[0.76em]
  {\footnotesize\sffamily\mdseries\color{black}
    $^{1}$ University of Science and Technology of China{\normalfont,}\enspace
    $^{2}$ Beijing Institute of Technology{\normalfont,}\enspace
    $^{3}$ Tsinghua University{\normalfont,}}\\[0.30em]
  {\footnotesize\sffamily\mdseries\color{black}
    $^{4}$ Chinese Academy of Sciences{\normalfont,}\enspace
    $^{5}$ Peking University{\normalfont,}\enspace
    $^{6}$ Beijing Academy of Artificial Intelligence}}
\date{}

\usepackage{xcolor}
\usepackage{environ}
\usepackage{tcolorbox}
\usepackage{tikz}
\usetikzlibrary{positioning, arrows.meta, calc, fit, shapes.geometric, backgrounds}
\usepackage{titlesec}
\usepackage{colortbl}
\usepackage{array}
\usepackage{placeins}
\usepackage{needspace}
\usepackage{enumitem}

\definecolor{paperblue}{RGB}{41,105,176}
\definecolor{accent}{RGB}{230,90,50}
\definecolor{soft}{RGB}{232,240,250}
\definecolor{tableheadgray}{RGB}{246,246,246}
\definecolor{metricgain}{RGB}{0,145,90}
\definecolor{metricloss}{RGB}{205,45,55}
\colorlet{linkblue}{paperblue}

\hypersetup{
  colorlinks=true,
  linkcolor=linkblue,
  citecolor=linkblue,
  urlcolor=linkblue,
  filecolor=linkblue,
  linktoc=all,
  bookmarksopen=true,
  bookmarksnumbered=true,
  pdfborder={0 0 0}
}

\graphicspath{{figures/}}

\titleformat{\section}
  {\color{paperblue}\normalfont\Large\bfseries\sffamily}
  {\thesection}{1em}{}
\titleformat{\subsection}
  {\color{paperblue}\normalfont\large\bfseries\sffamily}
  {\thesubsection}{1em}{}
\titleformat{\subsubsection}
  {\color{paperblue}\normalfont\normalsize\bfseries\sffamily}
  {\thesubsubsection}{1em}{}
\titleformat{\paragraph}[runin]
  {\normalfont\normalsize\bfseries\sffamily}{}{0pt}{}
\titlespacing*{\section}{0pt}{2.8ex plus 0.6ex minus 0.2ex}{1.0ex plus 0.2ex}
\titlespacing*{\subsection}{0pt}{2.3ex plus 0.5ex minus 0.2ex}{0.8ex plus 0.1ex}
\titlespacing*{\subsubsection}{0pt}{1.9ex plus 0.4ex minus 0.2ex}{0.7ex plus 0.1ex}
\titlespacing{\paragraph}{0pt}{1.2ex plus 0.3ex minus 0.1ex}{0.7em}

\renewenvironment{abstract}{\begin{tcolorbox}[
    colframe=paperblue,
    colback=white,
    arc=8pt,
    boxrule=0.5pt,
    left=15pt,
    right=15pt,
    top=12pt,
    bottom=12pt,
    before upper={\small\setlength{\parskip}{4pt}\setlength{\parindent}{0pt}}
  ]
  \begin{center}
    {\color{paperblue}\large\bfseries\sffamily Abstract\par}
    \vskip 0.5em
  \end{center}
}{\end{tcolorbox}
}

\renewcommand{\textbf}[1]{{\sffamily\bfseries #1}}

\makeatletter
\renewcommand{\maketitle}{\newpage
  \thispagestyle{plain}\null
  \vskip -0.8em
  \setbox\@tempboxa=\vbox{\hsize=\textwidth
    \parskip=0pt
    \centering
    {\Large\sffamily\bfseries \@title\par}}\dimen@=\ht\@tempboxa
  \advance\dimen@ by \dp\@tempboxa
  \advance\dimen@ by 3em
  {\parskip=0pt
    \noindent\vbox{\hsize=\textwidth
      {\color{paperblue}\hrule height 2pt}\vbox to \dimen@{\vfil\box\@tempboxa\vfil}{\color{paperblue}\hrule height 0.5pt}}\par
  }\vskip 1.45em
  {\centering
    {\large\sffamily
      \lineskip .5em\begin{tabular}[t]{c}\@author
      \end{tabular}\par}}\vskip 1.55em
}
\makeatother

\newcommand{\sg}{\operatorname{sg}}

\newcommand{\logsumexp}{\operatorname{logsumexp}}
\newcommand{\Huber}{\mathrm H}
\newcommand{\R}{\mathbb{R}}

\newcommand{\cf}{\mathrm{cf}}
\newcommand{\rec}{\mathrm{rec}}
\newcommand{\don}{\mathrm{don}}
\newsavebox{\ablationleftbox}
\newsavebox{\ablationrightbox}
\newlength{\ablationheight}

\newtheoremstyle{paperbold}
  {6pt}{6pt}{\normalfont}{0pt}{\sffamily\bfseries}{.}{0.5em}{}
\theoremstyle{paperbold}

\newtheorem{proposition}{Proposition}
\newtheorem{lemma}{Lemma}
\newtheorem{theorem}{Theorem}
\newtheorem{remark}{Remark}

\begin{document}

\maketitle
\begin{abstract}
Object-centric video models represent scenes with slots, yet exposed geometry can vary in meaning with appearance.  In Invariant Slot Attention (ISA), explicit position and scale can disagree with the decoded center and extent; edits can yield unexpected motion or resizing, and replacing appearance can shift geometry.
\textbf{GeoCo-SAVi} promotes \textbf{geometric authority} and \textbf{semantic alignment}. Its spatially equivariant, object-wise decoder makes position and scale effective commands: changing them moves or resizes the rendered support.  Factual position alignment ties position to the decoded center, and normalized attention overlap discourages duplicate allocation. Appearance transplantation aligns geometry semantics across objects, so recipient geometry governs layout while donor appearance supplies shape.
On Obj3D, GeoCo-SAVi matches ISA reconstruction, reduces latent-position-to-decoded-centroid error by nearly 90\%, and reduces appearance-induced size variation while producing the expected translation and scale responses.  On 250 MOVi-C videos, it improves reconstruction and instance grouping over both same-protocol references, and video editing over STAITUS.  GeoCo-SAVi transforms explicit geometry into compositional control, making both position and scale more readable and editable.
\par\vspace{4pt}
\noindent\begin{minipage}[t]{0.60\linewidth}
\footnotesize
{\textbf{Date:}} September 26, 2026\\
{\textbf{Project:}} \href{https://GeoCo-SAVi.github.io}{\texttt{https://GeoCo-SAVi.github.io}}\\
{\textbf{Code:}} \href{https://github.com/starx237/GeoCo-SAVi}{\texttt{https://github.com/starx237/GeoCo-SAVi}}
\end{minipage}\hfill
\begin{minipage}[t]{0.37\linewidth}
\footnotesize\raggedleft
$^{\star}$ Equal Contribution\\
$^{\dagger}$ Corresponding Authors\\
$^{\ddagger}$ Project Leader
\end{minipage}
\end{abstract}

\tableofcontents
\clearpage

\section{Introduction}
\label{sec:introduction}

Slot-based models provide object-centric vectors for decomposition, tracking,
and compositional generation.  Their appeal for editing rests on a simple
promise: changing an object's representation should predictably change that
object.  Slot Attention~\citep{slotattention} discovers such vectors through
competitive binding, and SAVi~\citep{savi} propagates them through video.  Most
variants still mix appearance with image-plane geometry.

ISA~\citep{isa} improves this interface by exposing an appearance vector,
two-dimensional position, and scale for every slot.  Its relative
coordinates reduce appearance's geometric burden.  Editing the exposed
variables can still yield unexpected motion or resizing.  Their absolute
meanings may vary across appearances: position can miss the rendered center,
equal scales can yield different extents, and appearance swaps at fixed
geometry can move or resize objects
(Fig.~\ref{fig:teaser}).
\par\vspace{4pt}
\begingroup
    \centering
    \setlength{\abovecaptionskip}{13pt}
    \includegraphics[width=0.95\columnwidth]{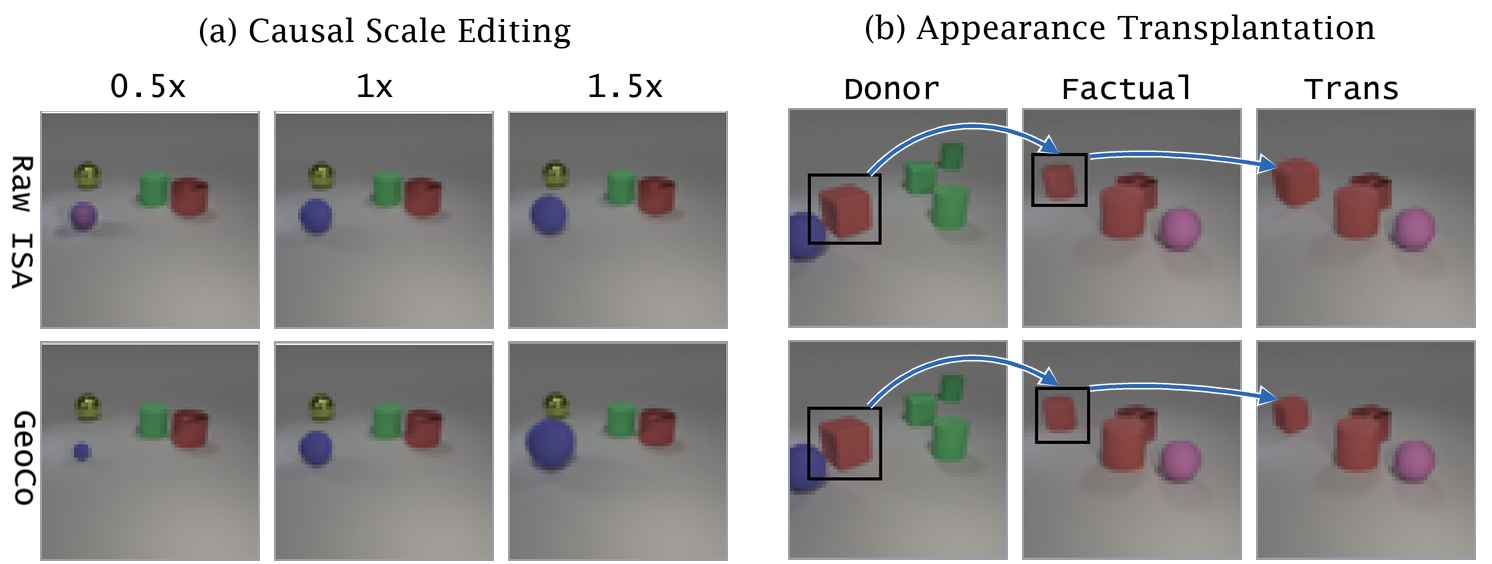}\par
    \captionof{figure}{Raw ISA (ISA in tables) responds weakly to scale commands and can shift geometry
    after transplantation.  GeoCo-SAVi produces predictable extent
    changes and preserves layout.}
    \label{fig:teaser}
\endgroup\par\vspace{4pt}

GeoCo-SAVi addresses this gap through a geometry-control formulation that combines
spatially equivariant, object-wise decoding with factual and counterfactual
calibration. The decoder makes each slot's position and scale effective spatial
commands: changing $\mathbf p_i$ translates its rendered support, while changing
$s_i$ resizes it around $\mathbf p_i$. Factual position alignment ties $\mathbf p_i$ to
the decoded center, and normalized attention overlap reduces duplicate
allocation.

Counterfactual supervision aligns geometry semantics across objects.
Appearance transplantation combines donor appearance with recipient
geometry, aligning decoded center and extent while retaining donor-dependent
support structure and compactness.  Appearance may remain statistically correlated with
geometry; we require predictable intervention semantics.

Our contributions are:
\begin{itemize}[leftmargin=1.2em,labelsep=0.55em,topsep=3pt,itemsep=2pt,parsep=0pt]
    \item We make ISA's position--scale representation geometrically calibrated and explicitly editable, with consistent semantics across appearances.
    \item We develop a geometry-control formulation combining spatially equivariant decoding with factual and counterfactual calibration, and characterize its ideal continuous raw-logit covariance.
    \item We demonstrate predictable position, scale, and appearance editing on Obj3D and MOVi-C, alongside competitive reconstruction and object grouping.
\end{itemize}

\section{Related Work}
\label{sec:related}

\paragraph{Object-centric image and video learning.}
Early generative models decompose scenes through sequential inference or mixture reconstruction~\citep{air,monet,genesis}. Slot Attention~\citep{slotattention} binds visual features to exchangeable slots, and SAVi~\citep{savi} extends this mechanism to conditional video tracking. Frozen self-supervised features improve grouping in realistic imagery~\citep{dinosaur}. Video methods build on self-supervision and temporal objectives~\citep{videosaur,solv,slotcontrast}, richer reasoning or synergistic learning~\citep{srl,ssync}, and reconstruction curricula~\citep{slotcurri}. STATM~\citep{statm} predicts subsequent slot states through spatiotemporal attention over a memory buffer, providing a modular reasoning component for object-centric video models. These advances target discovery and tracking; our focus is the intervention semantics of exposed geometry.

\paragraph{Explicit factors and controllable slots.}
AIR, SPACE, and SCALOR use what/where factors; ISA adds a slot-centric
appearance frame~\citep{air,space,scalor,isa}.
DISA and GOLD factor shape--texture and extrinsic--intrinsic attributes~\citep{disa,gold};
STAITUS and DSSA respectively separate appearance--pose and local appearance--identity~\citep{staitus,dssa}.
SlotAug uses augmentation commands and CTRL-O language selection; foreground
indicators refine support~\citep{slotaug,ctrlo,foregroundocl}. We test whether
$(\mathbf p,s)$ commands remain calibrated under novel appearance--geometry combinations.

\paragraph{Scale equivariance.}
Deep scale-space models, SESN, and DISCO are scale-equivariant
hierarchies~\citep{deepscalespaces,sesn,disco}.
SAC scales convolutional sampling within feature maps~\citep{sac}.
AIR and SPACE warp decoded object-local patches onto the image canvas
through position and scale transformations~\citep{air,space}.
Spatial broadcast decoding combines tiled latents with coordinates~\citep{spatialbroadcast}.

\section{Methodology}
\label{sec:method}

\subsection{Preliminary: Invariant Slot Attention}

For a frame $\mathbf x\in[0,1]^{3\times H\times W}$ on normalized grid
$\Omega$, an encoder produces tokens
$\{(\mathbf f_n,\mathbf u_n)\}_{n=1}^{L}$. Our isotropic ISA~\citep{isa} returns $N$
slot representations: $\mathbf a_i\in\R^{D_a}$ is appearance,
$\mathbf p_i\in[-1,1]^2$ is image-plane position, and $s_i>0$ is the scalar scale:
\begin{equation}
    \mathbf z_i=[\mathbf a_i;\mathbf p_i;s_i].
    \label{eq:slot-state}
\end{equation}
The shared decoder maps each slot to RGB $\mathbf y_i:\Omega\rightarrow[0,1]^3$ and a raw alpha logit $\ell_i:\Omega\rightarrow\R$. The core contains no learned foreground, background, or empty-slot gate. Full-softmax compositing gives
\begin{equation}
    \begin{aligned}
        \alpha_i(\mathbf u)
        &=\operatorname{softmax}(\ell(\mathbf u))_i
        ,\\
        \widehat{\mathbf{x}}(\mathbf u) &= \sum_i\alpha_i(\mathbf u)\mathbf y_i(\mathbf u).
    \end{aligned}
    \label{eq:compositor}
\end{equation}

\paragraph{Slot-centric inference.}

For slot $i$, ISA forms slot-centric token coordinates
\begin{equation}
    \boldsymbol\xi_i(\mathbf u_n)
    =\frac{\mathbf u_n-\mathbf p_i}{a_s s_i+\epsilon},
    \label{eq:isa-relative-coordinate}
\end{equation}
where $a_s=5$ fixes the coordinate scale. Its encoding enters attention keys and values. If $o_i(\mathbf u_n)$ is slot-softmax ownership, define
\begin{equation}
    w_i(\mathbf u_n)
    =\frac{o_i(\mathbf u_n)}{\sum_{m=1}^{L}o_i(\mathbf u_m)+\epsilon}
    \label{eq:spatial-attention}
\end{equation}
ISA updates appearance recurrently and reads geometry from the normalized moments
\begin{equation}
\begin{aligned}
    \mathbf p_i&=\sum_n w_i(\mathbf u_n)\mathbf u_n,
    &s_i&=\Big({\sum_n w_i(\mathbf u_n)
    \|\mathbf u_n-\mathbf p_i\|_2^2+\epsilon}\Big)^{1/2}.
\end{aligned}
\label{eq:isa-moments}
\end{equation}
Slot-indexed learnable appearance and scale initializations stabilize
reconstruction, while positions are initialized IID-randomly.  This relaxes
strict prior exchangeability; appearance transplantation remains valid.
Relative coordinates reduce appearance's geometric burden.  Decoder compliance
with $(\mathbf p_i,s_i)$ requires a commanded decoder.

Figure~\ref{fig:architecture} shows the core and calibration paths; see Sec.~\ref{sec:implementation-details} for implementation details.

\begin{figure}[t]
    \centering
    \includegraphics[width=\textwidth]{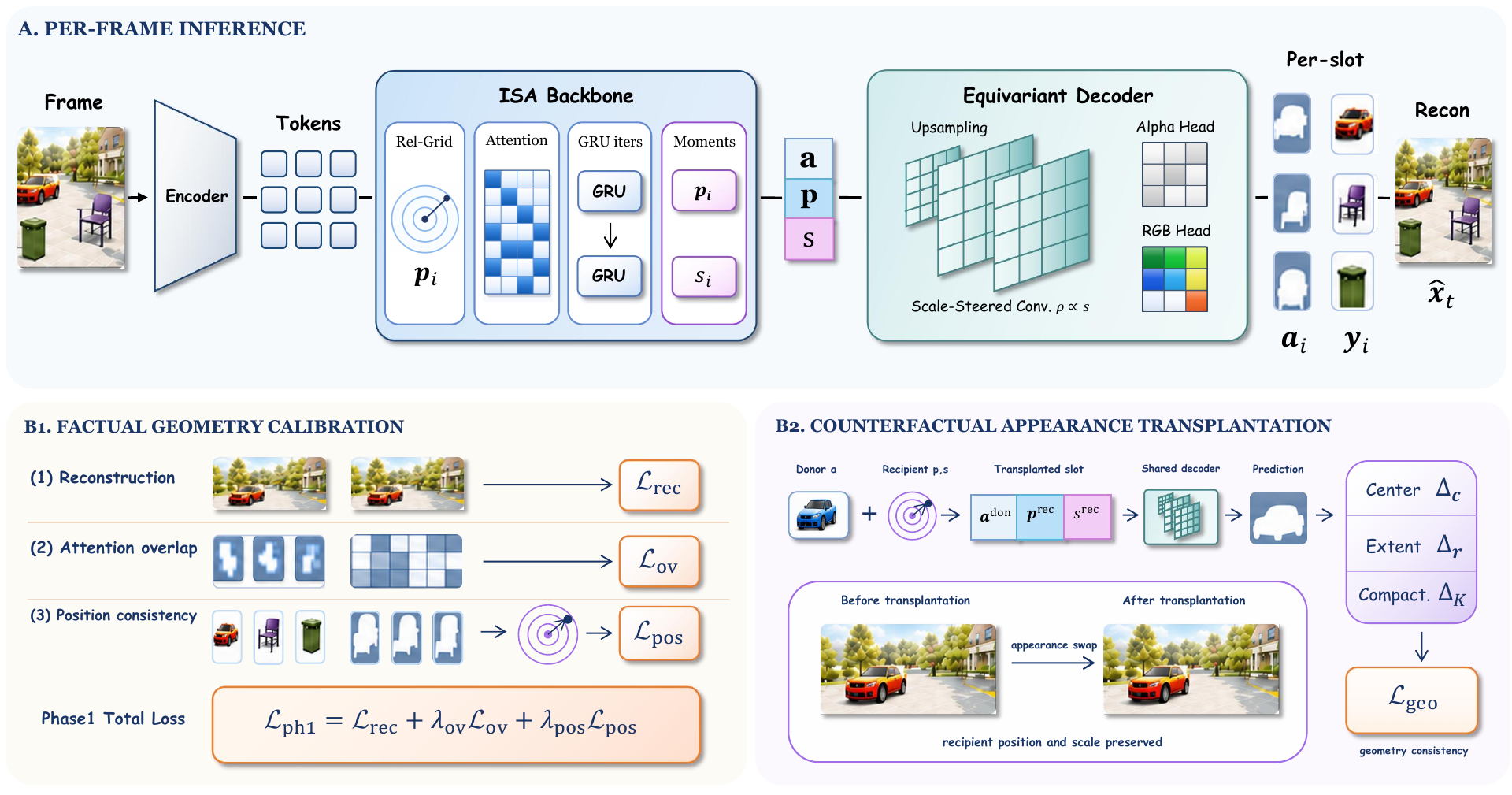}
    \caption{GeoCo-SAVi overview.
    \emph{A:} ISA infers per-slot appearance and geometry; a spatially
    equivariant decoder renders RGB and alpha.
    \emph{B1:} factual losses calibrate allocation and position.
    \emph{B2:} transplants align recipient center and extent while retaining
    donor compactness.}
    \label{fig:architecture}
\end{figure}

\begingroup
\addtolength{\abovedisplayskip}{2pt}
\addtolength{\belowdisplayskip}{2pt}
\addtolength{\abovedisplayshortskip}{2pt}
\addtolength{\belowdisplayshortskip}{2pt}
\subsection{Spatially Equivariant Decoder}
\label{sec:spatial-equivariant-decoder}

The desired interface is direct: changing $\mathbf p_i$ should translate the
rendered support, while changing $s_i$ should resize it around $\mathbf p_i$.
We consider two decoder implementations: scale-steered decoding on the
image-plane grid, and canonical-canvas decoding followed by a position--scale
warp. Both support predictable geometry-controlled editing
within our formulation. We describe the scale-steered implementation below;
Appendix~\ref{app:canonical-canvas} presents the canonical-canvas implementation and results.
Scale-steered decoding uses SAC's scale-adaptive sampling rule~\citep{sac}
within each slot renderer, with explicit $s_i$ controlling its sampling footprint.
For ideal continuous description, let
$\mathbf h:\R^2\rightarrow\R^{C_{\mathrm{in}}}$ be a feature field and
$\mathcal K\subset\mathbb Z^2$ a finite kernel support.  Its action and
layerwise sampling radius are
\begin{equation}
\begin{gathered}
    (\mathcal C_s\mathbf h)(\mathbf u)
    =\mbox{$\sum$}_{\boldsymbol\delta\in\mathcal K}
    \mathbf W_{\boldsymbol\delta}\,
    \mathbf h\!\left(\mathbf u+\rho_\ell(s)\boldsymbol\delta\right)
    +\mathbf b,\\[0.2em]
    \rho_\ell(s)=\Delta_\ell{s}/{s_{\mathrm{ref}}}\kappa_\ell,
    \\[0.2em]
    \kappa_\ell=\exp\!\left(\log \kappa_{\max}\tanh\psi_\ell\right).\\[0.1em]
\end{gathered}
    \label{eq:ssc}
\end{equation}
Here $\Delta_\ell$ is one normalized-grid pixel, $s_{\mathrm{ref}}>0$ is a global gauge, and $\kappa_\ell$ is an identity-initialized bounded gain. At $s=s_{\mathrm{ref}}$ and $\kappa_\ell=1$, the taps match ordinary stride-one convolution.

To state the requested scale response formally, let
$\mathcal T_{\mathbf p,k}$ rescale canvas coordinates by $k>0$ around
$\mathbf p$, and let $\mathcal D_{\mathbf p,k}$ apply the corresponding
transformation to a feature field:
\begin{equation}
\begin{aligned}
    \mathcal T_{\mathbf p,k}(\mathbf u)&=\mathbf p+k(\mathbf u-\mathbf p),\\
    (\mathcal D_{\mathbf p,k}\mathbf h)(\mathbf v)
    &=\mathbf h(\mathcal T_{\mathbf p,1/k}(\mathbf v)).
\end{aligned}
    \label{eq:scale-action}
\end{equation}
Because $\rho_\ell(ks)=k\rho_\ell(s)$, the continuous operators obey
\begin{equation}
    \mathcal C_{ks}\mathcal D_{\mathbf p,k}
    =\mathcal D_{\mathbf p,k}\mathcal C_s.
    \label{eq:ssc-commutation}
\end{equation}
Thus commanding $ks$ and then filtering is equivalent, in the ideal
continuous setting, to filtering at $s$ and rescaling the resulting field
around the same commanded slot center.
\begin{proposition}[Continuous raw-logit covariance]
\label{prop:raw-logit-equivariance}
Let $\ell_{\mathbf a,\mathbf p,s}$ be the ideal continuous per-slot alpha logit
generated from the slot-centric broadcast field by pointwise maps and
scale-steered layers with $\rho_\ell(ks)=k\rho_\ell(s)$.  With zero coordinate
stabilization and no boundary truncation, for every $\mathbf t\in\R^2$ and
$k>0$,
\begin{equation}
\begin{aligned}
\ell_{\mathbf a,\mathbf p+\mathbf t,s}(\mathbf u+\mathbf t)
    &=\ell_{\mathbf a,\mathbf p,s}(\mathbf u),\\
\ell_{\mathbf a,\mathbf p,ks}(\mathcal T_{\mathbf p,k}(\mathbf u))
    &=\ell_{\mathbf a,\mathbf p,s}(\mathbf u).
\end{aligned}
\label{eq:raw-logit-covariance}
\end{equation}
\end{proposition}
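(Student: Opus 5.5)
We argue by induction over the layers of the decoder, proving a stronger field-level statement. Every intermediate feature field $\mathbf h^{(\ell)}_{\mathbf a,\mathbf p,s}$ should transform covariantly under translation of $\mathbf p$ and under rescaling of $s$ about $\mathbf p$. The raw logit $\ell_{\mathbf a,\mathbf p,s}$ is the final channel of the last field, so it inherits both identities. Concretely, we claim that for all layers
\begin{equation*}
\mathbf h^{(\ell)}_{\mathbf a,\mathbf p+\mathbf t,s}(\mathbf u+\mathbf t)=\mathbf h^{(\ell)}_{\mathbf a,\mathbf p,s}(\mathbf u),
\qquad
\mathbf h^{(\ell)}_{\mathbf a,\mathbf p,ks}=\mathcal D_{\mathbf p,k}\,\mathbf h^{(\ell)}_{\mathbf a,\mathbf p,s}.
\end{equation*}
The second identity is equivalent to $\mathbf h^{(\ell)}_{\mathbf a,\mathbf p,ks}(\mathcal T_{\mathbf p,k}(\mathbf u))=\mathbf h^{(\ell)}_{\mathbf a,\mathbf p,s}(\mathbf u)$, because $\mathcal T_{\mathbf p,1/k}\circ\mathcal T_{\mathbf p,k}=\mathrm{id}$.

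\textbf{Base case.} The slot-centric broadcast field depends on $\mathbf u$ only through $\mathbf a$ and the relative coordinate. With zero coordinate stabilization ($\epsilon=0$), this coordinate is $\boldsymbol\xi(\mathbf u)=(\mathbf u-\mathbf p)/(a_s s)$, as in Eq.~\eqref{eq:isa-relative-coordinate}.

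Translation is immediate: $(\mathbf u+\mathbf t)-(\mathbf p+\mathbf t)=\mathbf u-\mathbf p$. For scale, $\mathcal T_{\mathbf p,k}(\mathbf u)-\mathbf p=k(\mathbf u-\mathbf p)$, and dividing by $a_s ks$ gives back $\boldsymbol\xi(\mathbf u)$.

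The absence of boundary truncation is what lets us treat the field as defined on all of $\R^2$. Without it, a shifted or rescaled domain would not match.

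\textbf{Inductive step, pointwise maps.} Here $\mathbf h^{(\ell+1)}(\mathbf u)=\phi(\mathbf h^{(\ell)}(\mathbf u))$. Both identities are statements about reindexing the spatial argument, and $\phi$ acts only on values, so it commutes with any reindexing.

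\textbf{Inductive step, translation through a scale-steered layer.} The operator $\mathcal C_s$ of Eq.~\eqref{eq:ssc} has taps $\mathbf u+\rho_\ell(s)\boldsymbol\delta$ that do not depend on $\mathbf p$. It therefore commutes with the shift $\mathbf h\mapsto\mathbf h(\cdot-\mathbf t)$. Substituting $\mathbf u+\mathbf t$ and applying the inductive hypothesis tap by tap closes this case.

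\textbf{Inductive step, scale through a scale-steered layer.} This is the substantive step, but Eq.~\eqref{eq:ssc-commutation} already supplies it:
\begin{equation*}
\mathbf h^{(\ell+1)}_{ks}=\mathcal C_{ks}\mathbf h^{(\ell)}_{ks}=\mathcal C_{ks}\mathcal D_{\mathbf p,k}\mathbf h^{(\ell)}_{s}=\mathcal D_{\mathbf p,k}\mathcal C_s\mathbf h^{(\ell)}_s=\mathcal D_{\mathbf p,k}\mathbf h^{(\ell+1)}_s .
\end{equation*}
For completeness we re-derive the commutation directly. Start from
\begin{equation*}
(\mathcal C_{ks}\mathcal D_{\mathbf p,k}\mathbf h)(\mathbf v)=\sum_{\boldsymbol\delta\in\mathcal K}\mathbf W_{\boldsymbol\delta}\,\mathbf h\bigl(\mathbf p+(\mathbf v+k\rho_\ell(s)\boldsymbol\delta-\mathbf p)/k\bigr)+\mathbf b .
\end{equation*}
The argument simplifies to $\mathcal T_{\mathbf p,1/k}(\mathbf v)+\rho_\ell(s)\boldsymbol\delta$, which is exactly $(\mathcal D_{\mathbf p,k}\mathcal C_s\mathbf h)(\mathbf v)$.

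This uses two facts. First, $\rho_\ell$ is exactly linear in $s$; the gain $\kappa_\ell$ and the ratio $\Delta_\ell/s_{\mathrm{ref}}$ do not depend on $s$, so they factor out. Second, the bias $\mathbf b$ is spatially constant and therefore invariant under $\mathcal D$.

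The main thing to watch is that no layer uses $\mathbf p$ or $s$ except through the relative coordinate and $\rho_\ell$. Examples would be an absolute-coordinate embedding, or a normalization that pools over a finite grid. The hypotheses of the proposition, which are ideal continuous fields, no truncation, and zero stabilization, are exactly what exclude these cases.

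Composing the layers and reading off the logit channel yields Eq.~\eqref{eq:raw-logit-covariance}.
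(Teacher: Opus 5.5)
Your proof is correct and follows essentially the same route as the paper. The paper proves the layer commutations $\mathcal C_{ks}\mathcal D_{\mathbf p,k}=\mathcal D_{\mathbf p,k}\mathcal C_s$ and $\mathcal C_s\mathcal D_{\mathbf t}=\mathcal D_{\mathbf t}\mathcal C_s$ (and that pointwise maps commute with both actions) as a separate lemma, checks the same two base-case identities for the broadcast field with zero stabilizer, and inducts over the finite decoder path through the pointwise alpha head, using $\rho_\ell(ks)=k\rho_\ell(s)$ exactly as your inlined derivation does.
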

Thus the ideal raw-logit path is covariant to commanded translation and scale
(layerwise proof: Supplementary Material, Sec.~\ref{app:ssc-proof}), excluding cross-slot
full-softmax alpha.  Finite-grid discretization, boundaries, and the bounded
fixed-pixel RGB residual make equivariance approximate
(Sec.~\ref{sec:implementation-details}).

\subsection{Phase 1: Factual Geometry Calibration}

Let $m_i$ be slot $i$'s differentiable one-foreground (OneFG) support for a
detached factual background set (Eq.~\eqref{eq:onefg}).  For any support,
$\boldsymbol\mu$, $r$, $A$, and $K=A/(r^2+\epsilon)$ denote centroid, RMS
radius, coverage, and compactness (Sec.~\ref{sec:implementation-details}).

We first optimize the per-frame reconstruction loss
\begin{equation}
    \mathcal L_{\mathrm{rec}}
    =\operatorname{\mathbb{E}}_{c,\mathbf u}\left[
    \left(\widehat x_c(\mathbf u)-x_c(\mathbf u)\right)^2\right].
    \label{eq:reconstruction-loss}
\end{equation}
After warm-up, normalized overlap calibrates slot allocation:
\begin{equation}
    \mathcal L_{\mathrm{ov}}
    =\operatorname{\mathbb{E}}_{i\ne j}\left[\sum\nolimits_n w_i(\mathbf u_n)w_j(\mathbf u_n)\right].
    \label{eq:overlap-loss}
\end{equation}
Unit spatial mass penalizes weak duplicates (uniform slot: $1/L$) without
appearance-space repulsion. The position loss aligns each valid factual OneFG centroid with its slot
position:
\begin{equation}
    \mathcal L_{\mathrm{pos}}
    =\operatorname{\mathbb{E}}_{i\in\mathcal V}\left[
    \overline{\Huber}_{\delta_p}\!\left(
    \boldsymbol\mu(m_i)-\sg(\mathbf p_i)
    \right)\right],
    \label{eq:position-loss}
\end{equation}
Here $\overline{\Huber}_\delta$ averages coordinatewise Huber loss and
$\mathcal V$ indexes valid supports; $\sg$ denotes stop-gradient. Only the support path receives gradients;
$(\mathbf p_i,s_i)$ and validity are detached
(piecewise form: Supplementary Material, Sec.~\ref{app:huber-loss}).

\endgroup
\subsection{Phase 2: Counterfactual Appearance Transplantation}
\label{sec:appearance-transplant}

Factual alignment cannot constrain unseen appearance--geometry combinations. We transplant donor $j$ into recipient $i$ as
\begin{equation}
    \mathbf z_{i\leftarrow j}^{\cf}
    =[\mathbf a_j^{\don};\mathbf p_i^{\rec};s_i^{\rec}].
    \label{eq:counterfactual-slot}
\end{equation}
For an accepted pair, $m_i^{\rec}$ and $m_j^{\don}$ are factual supports, and
$m_{i\leftarrow j}^{\cf}$ is decoded against the recipient's factual background.
Pair construction and detached filtering are in Supplementary Material,
Sec.~\ref{app:appearance-pairing}.  The geometry residuals are
\begin{align}
    \boldsymbol\Delta_c^{ij}
    &=\boldsymbol\mu(m_{i\leftarrow j}^{\cf})-\sg(\mathbf p_i),
    \label{eq:geo-center}\\[2pt]
    \Delta_r^{ij}
    &=\log r(m_{i\leftarrow j}^{\cf})
    -\sg(\log r(m_i^{\rec})),
    \label{eq:geo-radius}\\[2pt]
    \Delta_K^{ij}
    &=\log K(m_{i\leftarrow j}^{\cf})
    -\sg(\log K(m_j^{\don})).
    \label{eq:geo-compactness}
\end{align}
With detached factual confidence $\omega_{ij}$, the objective is
\begin{equation}
\begin{aligned}
    \mathcal L_{\mathrm{geo}}
    =\mathbb E_{(i,j)\sim\omega}\!\Big[
    \mathcal L_{\mathrm{ctr}}\big(\boldsymbol\Delta_c^{ij}\big)
    +\Huber_{\delta_r}\big(\Delta_r^{ij}\big)+\lambda_K\Huber_{\delta_K}\big(\Delta_K^{ij}\big)\Big].
\end{aligned}
\label{eq:geometry-loss}
\end{equation}
With factual targets and recipient commands detached, $\mathcal L_{\mathrm{geo}}$
updates the decoder and donor-appearance path through counterfactual alpha.
Proposition~\ref{prop:raw-logit-equivariance} characterizes the geometric transformation of the donor raw-logit field under recipient commands; support consistency is encouraged by calibration and evaluated on the finite grid.  $\mathcal L_{\mathrm{geo}}$ aligns center and
RMS extent across appearances while matching donor compactness, hence
$A^{\mathrm{target}}_{i\leftarrow j}=\sg(K_j^{\don})\sg(r_i^{\rec})^2$;
Supplementary Material, Secs.~\ref{app:center-tail-losses} and~\ref{app:appearance-pairing} define the center penalty and coherence filter.

\subsection{Training Objective and Curriculum}
\label{sec:curriculum}

The single-frame training objective is
\begin{equation}
    \mathcal L_{\mathrm{train}}=\mathcal L_{\mathrm{rec}}
    +\lambda_{\mathrm{ov}}\mathcal L_{\mathrm{ov}}
    +\lambda_{\mathrm{pos}}\mathcal L_{\mathrm{pos}}
    +\lambda_{\mathrm{geo}}\mathcal L_{\mathrm{geo}}
    +\lambda_{\mathrm{gate}}\mathcal L_{\mathrm{gate}}
    +\lambda_{\mathrm{tail}}\mathcal L_{\mathrm{tail}}.
    \label{eq:core-objective}
\end{equation}
Here $\mathcal L_{\mathrm{gate}}$ penalizes the mean squared terminal gate (Eq.~\eqref{eq:terminal-readout}); the selector has a separate objective and optimizer in the Obj3D trajectory. On MOVi-C, the terminal readout, selector, and tail penalty are inactive; the results demonstrate geometry-controlled video editing without relying on these optional refinements.

We warm up reconstruction, ramp factual position and overlap, and add
counterfactual geometry. Video training additionally activates slot
initialization from preceding frames. The Obj3D and MOVi-C schedules, including reference-scale continuation, are in Supplementary Material, Secs.~\ref{app:training-details} and~\ref{app:movic}, respectively.
Loss weights are chosen from the observed loss and module-wise gradient
magnitudes, then rounded to simple values.

\subsection{Implementation Details}
\label{sec:implementation-details}

\noindent\textbf{Optional terminal appearance readout.}
On Obj3D, a bounded, zero-initialized gate refines appearance after the final
readout while leaving $\mathbf p_i$ and $s_i$ unchanged.  The returned
appearance enters both decoder branches.  Supplementary Material,
Sec.~\ref{app:terminal-readout} gives the complete residual gate and routing
definition.

\noindent\textbf{Discrete decoder realization.}
Following spatial broadcast~\citep{spatialbroadcast}, each slot is placed on a
slot-centric grid; progressive bilinear synthesis then alternates with
scale-steered convolutions.  Every learned spatial operation on the alpha path
is scale-steered.  RGB additionally uses a bounded, zero-initialized fixed-pixel
micro residual on Obj3D.  Sampling, padding, clipping, aliasing, and cross-slot
competition delimit finite-grid equivariance.  Supplementary Material,
Secs.~\ref{app:discrete-ssc}--\ref{app:decoder-topology} give the discrete
operator, heads, and topology.

\noindent\textbf{Differentiable geometry support.}
Geometry training uses the differentiable logits of a detached factual
background set $\mathcal B$:
\begin{equation}
\begin{gathered}
    \ell_{\mathcal B}(\mathbf u)
    =\logsumexp_{j\in\mathcal B}\ell_j(\mathbf u),\quad
    m_i(\mathbf u)
    =\sigma\!\left(\gamma[
    \ell_i(\mathbf u)-\ell_{\mathcal B}(\mathbf u)]\right),
    \quad \gamma=2.
\end{gathered}
\label{eq:onefg}
\end{equation}
This sharpened OneFG mask removes other foreground competitors.  Invisible
geometry remains outside the proxy.  For nonnegative $m$, define
\begin{equation}
\begin{gathered}
M_m=\mbox{$\sum$}_{\mathbf u}m(\mathbf u),\quad
\boldsymbol\mu(m)=
(M_m+\epsilon)^{-1}{\mbox{$\sum$}_{\mathbf u}m(\mathbf u)\mathbf u},\\[0.5em]
r(m)^2=({M_m+\epsilon})^{-1}{\mbox{$\sum$}_{\mathbf u}m(\mathbf u)
\|\mathbf u-\boldsymbol\mu(m)\|_2^2}+\epsilon,\\[0.5em]
A(m)=\mathbb{E}_{\mathbf u}[m(\mathbf u)],\quad
K(m)=(r(m)^2+\epsilon)^{-1}{A(m)}.
\end{gathered}
\label{eq:support-moments}
\end{equation}
Under uniform scaling, $r$ is first-order, $A$ second-order, and $K$
invariant (Supplementary Material, Sec.~\ref{app:moment-proof}).  OneFG provides the
differentiable training proxy; the main editing metrics use hard full-softmax ownership.

\paragraph{Optional selective alpha ownership.}
A jointly trained selector refines foreground/background ownership using
detached internal evidence. Its operating point is fixed on a calibration
panel from the training split. Supplementary Material,
Secs.~\ref{app:transfer-minimality}--\ref{app:teacher-details} give the ownership
transfer rule, training-only teacher, and selector details.

\paragraph{Video slot initialization.}
For video inference, a STATM-style~\citep{statm} Transformer initializes slots
from preceding slot states, followed by current-frame ISA refinement.
It is trained through frame reconstruction; its residual update is detailed
in Supplementary Material, Sec.~\ref{app:temporal-initializer}.

\section{Experiments}
\label{sec:experiments}

We evaluate three questions: (1) whether GeoCo-SAVi preserves factual
reconstruction and decomposition quality; (2) whether position and scale
behave as commands; and (3) whether their meaning remains stable across appearances.
Controlled single-frame Obj3D~\citep{gswm} isolates these effects, while
MOVi-C~\citep{kubric} tests them on complex video objects.

\subsection{Experimental Setup and Evaluation Design}

\paragraph{Shared protocol and three support roles.}
Training uses one NVIDIA A800 80GB GPU. \emph{Training support} consists
solely of model-produced decoder/attention supports, including differentiable
OneFG (Eq.~\eqref{eq:onefg}).
\emph{Reference support} supplies external centers and pre-edit identities:
model-independent RGB pseudo-labels
on Obj3D and official visible GT masks on MOVi-C.
\emph{Measurement support} is the decoded mask obtained by argmax over full-softmax alpha.
Editing F-scores measure adherence to prescribed position, scale, and
appearance-transplantation commands; reference-based metrics assess
alignment with scene geometry.
All-slot evaluations include valid non-background objects without
cross-method matching.
Geometric response metrics use objects within the canvas to avoid clipping-induced
changes in center and size; F1 uses valid edits without boundary filtering. Results without
boundary filtering appear in Tables~\ref{tab:edit-mask-f1}--\ref{tab:obj3d-new-curves}
and~\ref{tab:movic-new-edits}--\ref{tab:movic-new-curves}.

\begingroup
\setlength{\parskip}{9pt}
With decoded center $\hat c$ and reference center $c^\star$,
$E_{p\rightarrow\hat c}$, $E_{p\rightarrow c^\star}$, and
$E_{\hat c\rightarrow c^\star}$ denote mean L2 distances
on the $[-1,1]^2$ grid. $O_{\mathrm{attn}}$ denotes normalized attention overlap.
At fixed geometry $g=(p,s)$, $\sigma_{r\mid g}$ and $\sigma_{A\mid g}$ measure
population radius and coverage variation across appearances. Radius uses normalized-grid
RMS units, and coverage is the area fraction.
For an edited mask $M_e$, the target $M_t$ is obtained by applying the
commanded transformation to the corresponding factual decoded mask.
Editing F1 is $2|M_e\cap M_t|/(|M_e|+|M_t|)$.
Position, scale, and appearance-transfer scores are
$F_{\mathrm{pos}},F_{\mathrm{scl}},F_{\mathrm{app}}$, respectively.
All-slot empty edited outputs receive zero F1; center-based errors use
outputs with defined centers.
\looseness=-1\par

\paragraph{Controlled Obj3D.}
Obj3D contains $64\times64$ scenes with moving colored solids.  ISA,
GeoCo-SAVi, and same-protocol SlotAug~\citep{slotaug} use six 64-dimensional
appearance states and train from scratch for 70k updates with matched capacity
and sample exposure. Isotropic ISA uses reconstruction alone; GeoCo-SAVi follows
Sec.~\ref{sec:curriculum}; SlotAug retains its published objective and commands,
with command normalization delayed to a later hidden layer.  PSNR averages four inference initializations.
Table~\ref{tab:obj3d-main} reports reconstruction, position alignment,
fixed-geometry stability, and editing responses. External position errors
cover all 2,600 windows, with 9,255 shared RGB pseudo-label objects.

\begingroup
\setlength{\intextsep}{16pt plus 2pt minus 2pt}
\setlength{\floatsep}{18pt plus 2pt minus 2pt}
\setlength{\textfloatsep}{22pt plus 2pt minus 2pt}
\begin{table}[!ht]
\centering
\small
\papertablecaption{Single-frame Obj3D results. PSNR and $O_{\mathrm{attn}}$ cover the full
validation set; $E_{p\rightarrow\hat c}$ and editing use the metric-specific scopes
in Sec.~\ref{sec:experiments}. Reference errors use 9,255 shared RGB pseudo-label
objects. Spreads fix $s=0.2$, with sample counts in the supplement; editing F1 is in percent. Best values are bold.}
\label{tab:obj3d-main}
\renewcommand{\arraystretch}{1.30}
\setlength{\tabcolsep}{3pt}
\tablesubheading{Factual Reconstruction and Geometry}
\begin{tabularx}{\textwidth}{l*{7}{>{\centering\arraybackslash}X}}
\toprule
\rowcolor{tableheadgray}
\textbf{Method} & PSNR$\uparrow$ & $E_{p\rightarrow\hat c}\downarrow$ & $E_{p\rightarrow c^\star}\downarrow$ & $E_{\hat c\rightarrow c^\star}\downarrow$ & $O_{\mathrm{attn}}\downarrow$ & $\sigma_{r\mid g}\downarrow$ & $\sigma_{A\mid g}\downarrow$ \\
\midrule
ISA & 44.145 & 0.1049 & 0.1023 & 0.0420 & 0.0148 & 0.0440 & 0.0183 \\
SlotAug & 43.216 & 0.0476 & 0.0579 & \textbf{0.0358} & 0.0124 & 0.0520 & 0.0226 \\
GeoCo-SAVi & \textbf{44.210} & \textbf{0.0107} & \textbf{0.0436} & 0.0391 & \textbf{0.0007} & \textbf{0.0211} & \textbf{0.0086} \\
\bottomrule
\end{tabularx}
\vspace{6pt}

\tablesubheading{Counterfactual Editing Capability}
\begin{tabularx}{\textwidth}{l*{7}{>{\centering\arraybackslash}X}}
\toprule
\rowcolor{tableheadgray}
\textbf{Method} & $F_{\mathrm{pos}}\uparrow$ & $F_{\mathrm{scl}}\uparrow$ & $F_{\mathrm{app}}\uparrow$ & $E_{\Delta p}\downarrow$ & $\beta_r\to1$ & $\beta_A\to2$ & $D_{\mathrm{app}}\downarrow$ \\
\midrule
ISA & 86.35 & 62.84 & 75.73 & 0.0091 & 0.0939 & -0.0092 & 0.0442 \\
SlotAug & 90.78 & 82.19 & 78.20 & 0.0060 & 0.5412 & 1.0575 & 0.0321 \\
GeoCo-SAVi & \textbf{93.29} & \textbf{92.72} & \textbf{91.09} & \textbf{0.0051} & \textbf{1.0339} & \textbf{2.0225} & \textbf{0.0124} \\
\bottomrule
\end{tabularx}
\par\vspace{18pt}
    \centering
    \includegraphics[width=\textwidth]{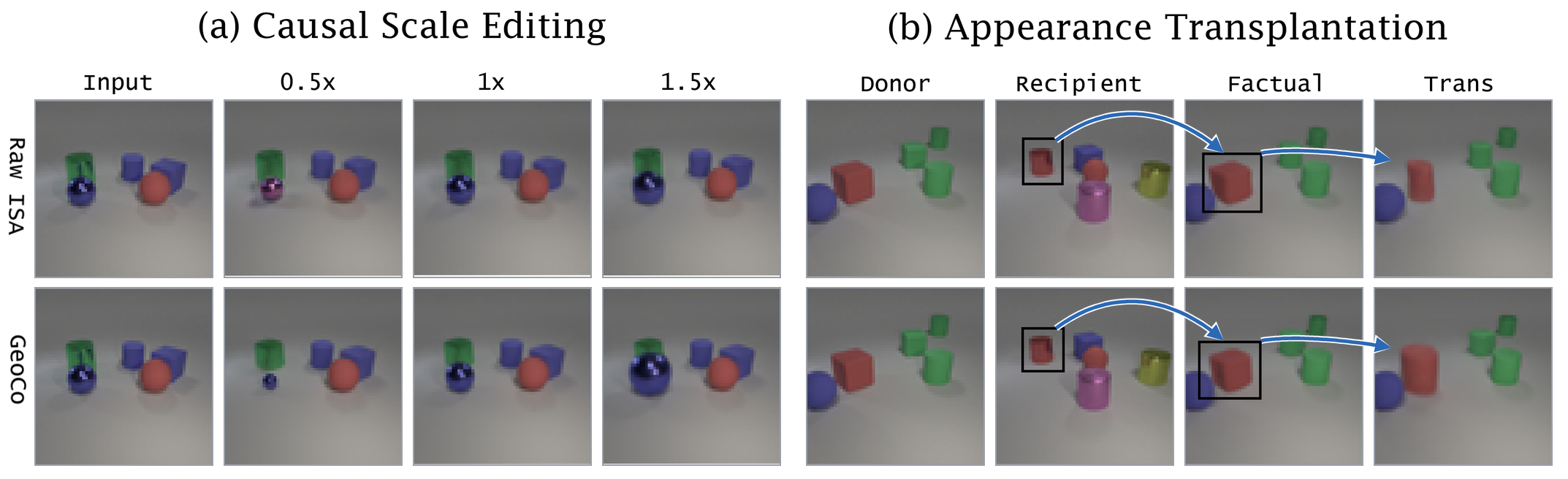}
    \captionsetup{position=bottom,aboveskip=12pt,belowskip=0pt}
    \captionof{figure}{Counterfactual control on complementary Obj3D examples.
    GeoCo-SAVi produces a predictable extent response to scale commands
    (\emph{left}) and places donor appearance at recipient geometry
    (\emph{right}); Raw ISA is comparatively scale-insensitive and its
    transplanted support drifts.}
    \label{fig:obj3d-counterfactual}
\par\vspace{18pt}
\end{table}

\noindent\textbf{MOVi-C.}
We use all 24 frames of the official 250-video validation set.  All methods
reconstruct $64\times64$ RGB from frozen DINOv2 features~\citep{dinov2}.
GeoCo-SAVi uses 11 slots, 64-dimensional appearance, and a 100k curriculum:
two-frame clips through 70k, then four-frame clips with a zero-residual temporal
initializer.  Same-protocol STATM-SAVi~\citep{statm} and
STAITUS~\citep{staitus} match capacity, updates, sample exposure, and common
settings.  STATM-SAVi uses a standard Slot Attention core with the published
slot-wise spatial and temporal Transformer. STAITUS retains its temporal
appearance predictor and active-set gate; baseline configurations are detailed
in Supplementary Sec.~\ref{app:movic}. Factual metrics cover all 250 videos.
External position evaluation uses all 1,543 GT tracks with fixed pre-edit
association. Complete-video editing uses 238 shared recipient/donor video
pairs; geometric response diagnostics use all eligible slots.
\looseness=-1\par

\titlespacing*{\subsection}{0pt}{14pt}{7pt}
\subsection{Factual Reconstruction and Decomposition}
\label{sec:factual-results}

\begin{table}[!ht]
\small
\setbox\ablationleftbox=\vbox\bgroup
\hsize=0.52\linewidth
\linewidth=\hsize
\parindent=0pt
\setlength{\parskip}{0pt}
\centering
\small
\captionsetup{type=table}\papertablecaption{MOVi-C segmentation on 250 equally weighted validation videos. Best values are bold.}
\label{tab:movic-factual}
\renewcommand{\arraystretch}{1.25}
\setlength{\tabcolsep}{2pt}
\begin{tabularx}{\linewidth}{@{}l>{\centering\arraybackslash}X>{\centering\arraybackslash}Xc>{\centering\arraybackslash}X@{}}
\toprule
\rowcolor{tableheadgray}
\textbf{Method} & \textbf{FG-ARI} & \textbf{ARI}
& \textbf{mIoU} & \textbf{mBO} \\
\midrule
STAITUS & 0.1550 & 0.1114 & 0.1242 & 0.1307 \\
STATM-SAVi & 0.1619 & 0.4052
& 0.2035 & 0.2085 \\
GeoCo-SAVi & \textbf{0.1935} & \textbf{0.4610}
& \textbf{0.2451} & \textbf{0.2507} \\
\bottomrule
\end{tabularx}
\par\kern0pt
\egroup
\setbox\ablationrightbox=\vbox\bgroup
\hsize=0.46\linewidth
\linewidth=\hsize
\parindent=0pt
\emergencystretch=1em
\baselineskip=12pt plus 3pt
\parfillskip=0pt plus 0.6\linewidth
PSNR (dB) is $10\log_{10}(1/\mathrm{MSE})$ per 24-frame video for RGB in $[0,1]$.
MOVi-C full-softmax alpha is resized to GT resolution before slotwise argmax.
FG-ARI, ARI, Hungarian mIoU, and mBO use all 24 frames per video,
averaged equally over 250 videos. Higher values indicate better object segmentation quality on the evaluated videos.
\par
\egroup
\setlength{\ablationheight}{\dimexpr\ht\ablationleftbox+\dp\ablationleftbox\relax}
\ifdim\dimexpr\ht\ablationrightbox+\dp\ablationrightbox\relax>\ablationheight
\setlength{\ablationheight}{\dimexpr\ht\ablationrightbox+\dp\ablationrightbox\relax}
\fi
\nointerlineskip
\noindent\vtop to\ablationheight{\vskip0pt\unvbox\ablationleftbox\vfil}\hfill
\vtop to\ablationheight{\vskip0pt\unvbox\ablationrightbox}
\par\smallskip

\end{table}

GeoCo-SAVi matches ISA reconstruction with about 95.5\% lower attention
overlap (Table~\ref{tab:obj3d-main}). On MOVi-C, it leads both references in PSNR
and all four segmentation metrics (Tables~\ref{tab:movic-factual}
and~\ref{tab:movic-control}).
\par

\subsection{Commanded Position and Scale}
\label{sec:commanded-results}

\begin{table}[!ht]
\centering
\small
\papertablecaption{MOVi-C reconstruction, geometry, and editing. Position errors use 1,543 GT tracks with
fixed pre-edit association; PSNR and $O_{\mathrm{attn}}$ use all 250 videos; spreads fix
$s=0.2$, with sample counts in the supplement. F1 (\%) pools all 24 frames per video and averages 238 shared
recipient/donor pairs. Other response metrics use objects within the canvas.
N/A denotes an unsupported geometry interface; best values are bold.}
\label{tab:movic-control}
\renewcommand{\arraystretch}{1.30}
\setlength{\tabcolsep}{3pt}
\tablesubheading{Factual Reconstruction and Geometry}
\begin{tabularx}{\textwidth}{l*{7}{>{\centering\arraybackslash}X}}
\toprule
\rowcolor{tableheadgray}
\textbf{Method} & PSNR$\uparrow$ & $E_{p\rightarrow\hat c}\downarrow$ & $E_{p\rightarrow c^\star}\downarrow$ & $E_{\hat c\rightarrow c^\star}\downarrow$ & $O_{\mathrm{attn}}\downarrow$ & $\sigma_{r\mid g}\downarrow$ & $\sigma_{A\mid g}\downarrow$ \\
\midrule
STATM-SAVi & 21.314 & N/A & N/A & N/A & 0.0045 & N/A & N/A \\
STAITUS & 19.960 & 0.0868 & 0.3417 & 0.3128 & \textbf{0.0008} & 0.1756 & 0.1439 \\
GeoCo-SAVi & \textbf{22.808} & \textbf{0.0276} & \textbf{0.2403} & \textbf{0.2341} & 0.0011 & \textbf{0.0503} & \textbf{0.0191} \\
\bottomrule
\end{tabularx}
\vspace{6pt}

\tablesubheading{Video Editing and Geometry Response}
\begin{tabularx}{\textwidth}{l*{7}{>{\centering\arraybackslash}X}}
\toprule
\rowcolor{tableheadgray}
\textbf{Method} & $F_{\mathrm{pos}}\uparrow$ & $F_{\mathrm{scl}}\uparrow$ & $F_{\mathrm{app}}\uparrow$ & $E_{\Delta p}\downarrow$ & $\beta_r\to1$ & $\beta_A\to2$ & $D_{\mathrm{app}}\downarrow$ \\
\midrule
STATM-SAVi & N/A & N/A & N/A & N/A & N/A & N/A & N/A \\
STAITUS & 75.34 & 72.14 & 45.15 & 0.0174 & 0.7322 & 1.2122 & 0.0585 \\
GeoCo-SAVi & \textbf{86.51} & \textbf{84.93} & \textbf{74.01} & \textbf{0.0067} & \textbf{0.8892} & \textbf{1.6178} & \textbf{0.0334} \\
\bottomrule
\end{tabularx}
\end{table}

\paragraph{Position.}
GeoCo-SAVi improves latent-to-decoded and latent-to-reference position alignment
on both datasets (Tables~\ref{tab:obj3d-main} and~\ref{tab:movic-control}).
On Obj3D, GeoCo-SAVi's decoded centers are closer to the RGB pseudo-label
centers than ISA's, while SlotAug has the lowest decoded-center reference error.
The error $E_{\Delta p}$ compares commanded and observed center
displacements in image-diagonal units.
\par
\newpage
\begin{samepage}
Video F1 edits the same slot through all 24 frames, sums intersections and mask
areas over time before applying the F1 ratio, and averages videos and commands equally.
Targets follow the prescribed transformations (Sec.~\ref{sec:experiments}).
STATM-SAVi lacks an explicit geometry-control interface.
\par\end{samepage}

\newpage
\paragraph{Scale.}
We use factors $0.5$, $0.75$, $1$, $1.25$, and $1.5$.
For $x\in\{r,A\}$, the fitted log-response slope $\beta_x$ has ideal target
1 for radius and 2 for coverage; editing F1 excludes the identity command.
GeoCo-SAVi improves scale-edit F1 on both datasets. Its Obj3D response closely follows
the ideal scaling law; MOVi-C retains
finite-grid and scene-dependent deviations
(Tables~\ref{tab:obj3d-main} and~\ref{tab:movic-control};
Fig.~\ref{fig:obj3d-counterfactual}).

\subsection{Appearance-Consistent Geometry}
\label{sec:appearance-results}

GeoCo-SAVi reduces appearance-transplant drift on both datasets with recipient geometry fixed.
$D_{\mathrm{app}}$ measures decoded-center displacement on the normalized grid.
At fixed $g$ with $s=0.2$, radius and coverage spreads are also lower,
indicating more consistent extent and area across appearances
(Tables~\ref{tab:obj3d-main} and~\ref{tab:movic-control}).
Complete-video appearance-transfer F1 is 74.01\% (STAITUS: 45.15\%).
These results and Fig.~\ref{fig:obj3d-counterfactual} support more consistent
geometric commands across donor appearances.

\begin{figure}[!htbp]
\centering
\begin{minipage}[c]{0.52\linewidth}
    \includegraphics[width=\linewidth]{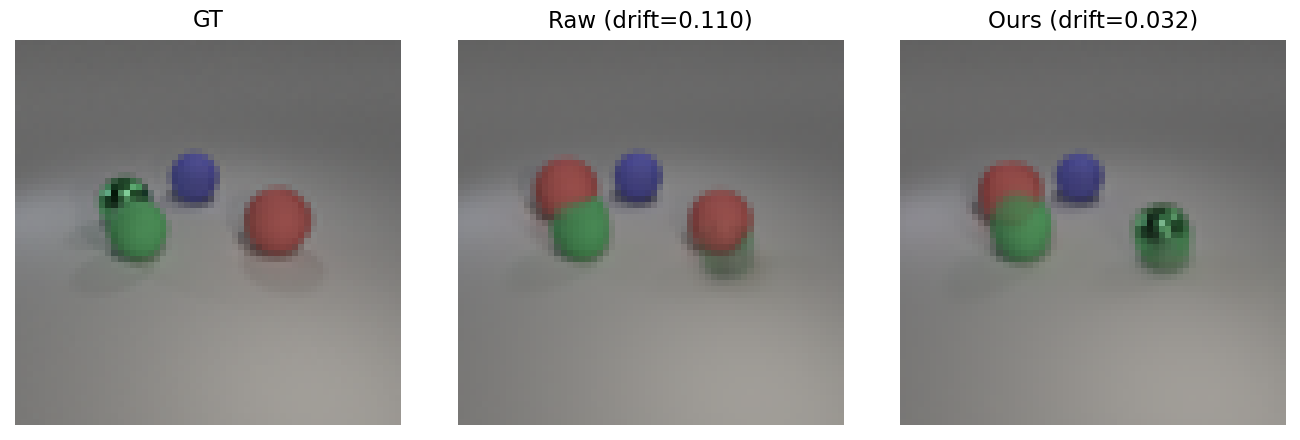}
    \caption{Appearance swap.}
    \label{fig:obj3d-appswap}
\end{minipage}\hfill
\begin{minipage}[c]{0.44\linewidth}
    The appearance-swap comparison in Fig.~\ref{fig:obj3d-appswap}
    shows that factual calibration reduces appearance-swap centroid drift.
    Raw ISA displaces the swapped object and leaves ghost artifacts; the
    calibrated diagnostic preserves its recipient location.  This 40k
    diagnostic follows a separate trajectory using reconstruction, position,
    and normalized-overlap losses.
\end{minipage}
\end{figure}

STAITUS's $O_{\mathrm{attn}}$ is computed only on active slots,
whereas the other methods include empty slots with uniform
attention; this convention favors lower STAITUS values. Scopes and
object-matched editing results with reference-mask-based RGB targets appear in Secs.~\ref{app:obj3d-evaluation}--\ref{app:movic}
(\ref{app:obj3d-edit-validation} and~\ref{app:movic-edit-validation}).
\par\endgroup
\endgroup

\clearpage
\subsection{Ablation Study}
\label{sec:ablations}

Seven controls assess decoder design, both calibration objectives, and optional refinements.

\par\addvspace{16pt}
\begingroup
\setlength{\parskip}{0pt}
\setbox\ablationleftbox=\vbox\bgroup
\hsize=0.44\linewidth
\linewidth=\hsize
\parindent=0pt
\setlength{\parskip}{0pt}
\centering
\small
\captionsetup{type=table}\papertablecaption{Obj3D component comparisons. Full denotes GeoCo-SAVi. Response metrics use the same within-canvas scope as Table~\ref{tab:obj3d-main}.}
\label{tab:ablation}
\renewcommand{\arraystretch}{1.04}
\setlength{\tabcolsep}{3pt}
\setlength{\aboverulesep}{1.5pt}
\setlength{\belowrulesep}{2pt}
\begin{tabularx}{\linewidth}{X>{\centering\arraybackslash}p{0.18\linewidth}>{\centering\arraybackslash}p{0.23\linewidth}}
\toprule
\rowcolor{tableheadgray}
\textbf{Diagnostic} & \textbf{Ablated} & \textbf{Full} \\
\midrule
\multicolumn{3}{@{}l}{\itshape Conventional decoder} \\
$\beta_r\to1$ & 0.7242 & \textbf{1.0339} \\
$\beta_A\to2$ & 1.3929 & \textbf{2.0225} \\
\midrule
\multicolumn{3}{@{}l}{\itshape Without $\mathcal L_{\mathrm{ov}}$} \\
$O_{\mathrm{attn}}\downarrow$ & 0.0077 & \textbf{0.0007} \\
\midrule
\multicolumn{3}{@{}l}{\itshape Without $\mathcal L_{\mathrm{pos}}$} \\
$E_{p\rightarrow\hat c}\downarrow$ & 0.0175 & \textbf{0.0107} \\
$E_{p\rightarrow c^\star}\downarrow$ & 0.0536 & \textbf{0.0436} \\
\midrule
\multicolumn{3}{@{}l}{\itshape Without $\mathcal L_{\mathrm{geo}}$} \\
$D_{\mathrm{app}}\downarrow$ & 0.0305 & \textbf{0.0124} \\
$\sigma_{r\mid g}\downarrow$ & 0.0256 & \textbf{0.0211} \\
$\sigma_{A\mid g}\downarrow$ & 0.0098 & \textbf{0.0086} \\
\midrule
\multicolumn{3}{@{}l}{\itshape Without terminal readout} \\
PSNR$\uparrow$ & 43.7399 & \textbf{44.2099} \\
$E_{\hat c\rightarrow c^\star}\downarrow$ & 0.0487 & \textbf{0.0391} \\
\midrule
\multicolumn{3}{@{}l}{\itshape Without selector} \\
PSNR$\uparrow$ & 44.2056 & \textbf{44.2099} \\
$E_{\hat c\rightarrow c^\star}\downarrow$ & 0.0451 & \textbf{0.0391} \\
\midrule
\multicolumn{3}{@{}l}{\itshape Without $\mathcal L_{\mathrm{tail}}$} \\
PSNR$\uparrow$ & \textbf{44.2995} & 44.2099 \\
$D_{\mathrm{app}}\downarrow$ & 0.0215 & \textbf{0.0124} \\
\bottomrule
\end{tabularx}
\par\kern0pt
\egroup
\setbox\ablationrightbox=\vbox\bgroup
\hsize=0.54\linewidth
\linewidth=\hsize
\parindent=0pt
\fontsize{11}{13.2}\selectfont
\setlength{\parskip}{6pt}
\emergencystretch=1em
\parfillskip=0pt plus 1fil

\noindent\textbf{Decoder design.}
The conventional decoder has radius and coverage slopes below their
ideal targets; the full model is closer to both. These slopes measure
responses to scale commands.
\looseness=-1\par\vfill

\noindent\textbf{Attention allocation.}
Without $\mathcal L_{\mathrm{ov}}$, factual attention overlap rises from
0.0007 to 0.0077, indicating that spatial normalization discourages duplicate
attention without a foreground/background gate.
\par\vfill

\noindent\textbf{Factual position calibration.}
Without $\mathcal L_{\mathrm{pos}}$, latent-to-decoded center error rises
from 0.0107 to 0.0175, while latent-to-reference error increases
from 0.0436 to 0.0536. Factual calibration improves command alignment
and slightly reduces reference-center error.
\par\vfill

\noindent\textbf{Counterfactual geometry.}
Without $\mathcal L_{\mathrm{geo}}$, transfer centroid drift increases from
0.0124 to 0.0305; radius and coverage spreads increase from
0.0211/0.0086 to 0.0256/0.0098, supporting its role in stabilizing
object centers and spatial extents across donor appearances.
\par\vfill

\noindent\textbf{Optional components.}
Terminal readout raises PSNR from 43.7399 to 44.2099 and lowers
decoded-center reference error from 0.0487 to 0.0391. The selector also
improves both metrics. The tail penalty trades a small PSNR decrease
(44.2995 to 44.2099) for lower appearance-transplant drift (0.0215 to 0.0124).
\par\egroup
\setlength{\ablationheight}{\dimexpr\ht\ablationleftbox+\dp\ablationleftbox\relax}
\ifdim\dimexpr\ht\ablationrightbox+\dp\ablationrightbox\relax>\ablationheight
  \setlength{\ablationheight}{\dimexpr\ht\ablationrightbox+\dp\ablationrightbox\relax}
\fi
\nointerlineskip
\noindent\vtop to \ablationheight{\vskip0pt\unvbox\ablationleftbox\vfil}\hfill
\vtop to \ablationheight{\vskip0pt\unvbox\ablationrightbox}
\endgroup
\par\addvspace{6pt}

All variants are evaluated at 70k updates using the same metric definitions and evaluation procedures as the full model. PSNR is averaged over four inference initializations. Component settings and training schedules are detailed in Supplementary Sec.~\ref{app:component-comparisons}.

\begingroup
\setlength{\parskip}{4pt}
\titlespacing*{\section}{0pt}{6pt}{3pt}
\section{Conclusion}
\label{sec:conclusion}

\noindent\textbf{Summary.}
GeoCo-SAVi connects explicit position--scale representations to decoded object geometry, enabling calibrated readout and direct editing with consistent semantics across appearances. On Obj3D, it matches ISA reconstruction with better geometry and editing. On MOVi-C, it improves reconstruction and grouping over same-protocol references and video editing over STAITUS.

\noindent\textbf{Limitations.}
Equivariance is established for continuous per-slot raw logits; discretization, clipping, and cross-slot softmax limit its finite-grid realization. Scale is isotropic, decoded support need not match physical object boundaries, and occlusion can impair geometric recovery. Video-training cost limits configuration and hyperparameter coverage.

\noindent\textbf{Future work.}
Priorities include broader validation, explicit depth and occlusion modeling, and combining calibrated image-plane representations with camera and scene geometry to recover real-world object trajectories for world-model learning and prediction under partial visibility.
\par\endgroup

\FloatBarrier
\addcontentsline{toc}{section}{References}
\begingroup
\small

\renewcommand{\bibfont}{\fontsize{10}{12.8}\selectfont}

\endgroup

\clearpage
\appendix
\linespread{1.05}\selectfont
\setlength{\parskip}{7pt plus 1pt minus 1pt}
\numberwithin{equation}{section}
\section*{Supplementary Material}
\addcontentsline{toc}{section}{Supplementary Material}
\addtocontents{toc}{\protect\setcounter{tocdepth}{1}}
\section{Continuous Equivariance and Scaling Proofs}
\label{app:equivariance-proofs}

\subsection{Scale-Steered Convolution}
\label{app:ssc-proof}

\begin{lemma}[Layer commutation]
\label{lem:ssc-commutation}
Let $\mathcal C_s$ be the continuous scale-steered layer in
Eq.~\eqref{eq:ssc}, acting on fields over $\R^2$, and assume
$\rho_\ell(ks)=k\rho_\ell(s)$ for every $k>0$.  Define translation by
$(\mathcal D_{\mathbf t}\mathbf h)(\mathbf v)
=\mathbf h(\mathbf v-\mathbf t)$.  With no boundary truncation,
\begin{equation}
    \mathcal C_{ks}\mathcal D_{\mathbf p,k}
    =\mathcal D_{\mathbf p,k}\mathcal C_s,
    \qquad
    \mathcal C_s\mathcal D_{\mathbf t}
    =\mathcal D_{\mathbf t}\mathcal C_s.
    \label{eq:layer-commutation}
\end{equation}
Every pointwise affine map or nonlinearity commutes with both actions as well.
\end{lemma}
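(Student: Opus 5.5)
The plan is to verify each identity by direct substitution, evaluating both sides at an arbitrary point $\mathbf v\in\R^2$ and using only the definition of $\mathcal C_s$ in Eq.~\eqref{eq:ssc} and of the actions in Eq.~\eqref{eq:scale-action}. Since there is no boundary truncation, every tap $\mathbf u+\rho_\ell(s)\boldsymbol\delta$ lies in the domain of $\mathbf h$. The sum over $\mathcal K$ is finite, so the argument is pointwise and needs no analytic conditions on $\mathbf h$.

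\textbf{Scale identity.} First compute the right-hand side:
$(\mathcal D_{\mathbf p,k}\mathcal C_s\mathbf h)(\mathbf v)=(\mathcal C_s\mathbf h)(\mathbf p+(\mathbf v-\mathbf p)/k)$. Writing $\mathbf v'=\mathbf p+(\mathbf v-\mathbf p)/k$, this equals
$\sum_{\boldsymbol\delta}\mathbf W_{\boldsymbol\delta}\mathbf h(\mathbf v'+\rho_\ell(s)\boldsymbol\delta)+\mathbf b$.

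Next compute the left-hand side:
$(\mathcal C_{ks}\mathcal D_{\mathbf p,k}\mathbf h)(\mathbf v)=\sum_{\boldsymbol\delta}\mathbf W_{\boldsymbol\delta}(\mathcal D_{\mathbf p,k}\mathbf h)(\mathbf v+\rho_\ell(ks)\boldsymbol\delta)+\mathbf b$. Expanding the action, the argument of $\mathbf h$ is
$\mathbf p+(\mathbf v+\rho_\ell(ks)\boldsymbol\delta-\mathbf p)/k$.

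The key step is the homogeneity $\rho_\ell(ks)=k\rho_\ell(s)$. It turns this argument into $\mathbf v'+\rho_\ell(s)\boldsymbol\delta$, so the two sides agree term by term. This step carries the content: if $\rho_\ell$ were not linear in $s$, the taps would not rescale consistently. It also explains why $\kappa_\ell$ must be independent of $s$. By Eq.~\eqref{eq:ssc}, $\rho_\ell(s)=\Delta_\ell s\kappa_\ell/s_{\mathrm{ref}}$, and $\kappa_\ell$ depends only on $\psi_\ell$, so homogeneity holds automatically.

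\textbf{Translation identity.} Evaluate both sides:
$(\mathcal C_s\mathcal D_{\mathbf t}\mathbf h)(\mathbf v)=\sum_{\boldsymbol\delta}\mathbf W_{\boldsymbol\delta}\mathbf h(\mathbf v+\rho_\ell(s)\boldsymbol\delta-\mathbf t)+\mathbf b$, and this is exactly $(\mathcal C_s\mathbf h)(\mathbf v-\mathbf t)=(\mathcal D_{\mathbf t}\mathcal C_s\mathbf h)(\mathbf v)$. Here $s$ is held fixed on both sides, since translation does not change the commanded scale.

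\textbf{Pointwise maps.} A pointwise map has the form $(\Phi\mathbf h)(\mathbf v)=\phi(\mathbf h(\mathbf v))$, with no dependence on $\mathbf v$ except through $\mathbf h(\mathbf v)$. Both actions are pure reparametrizations of the domain, $(\mathcal D\mathbf h)(\mathbf v)=\mathbf h(g(\mathbf v))$. Hence $\Phi\mathcal D\mathbf h$ and $\mathcal D\Phi\mathbf h$ both equal $\phi(\mathbf h(g(\mathbf v)))$.

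The one caveat to flag is that a ``pointwise'' map must not inject absolute coordinates, so this argument excludes coordinate concatenation. That is consistent with the proposition's use of a slot-centric broadcast field, whose coordinate channels transform appropriately. I expect no real obstacle here. The only care needed is bookkeeping the inverse map $\mathcal T_{\mathbf p,1/k}$ in $\mathcal D_{\mathbf p,k}$ so that the factor $1/k$ cancels the $k$ coming from $\rho_\ell(ks)$.
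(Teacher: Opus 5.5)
Your proposal is correct and takes essentially the same approach as the paper. Both substitute directly into Eq.~\eqref{eq:ssc}, so that $\rho_\ell(ks)=k\rho_\ell(s)$ cancels the $1/k$ from $\mathcal T_{\mathbf p,1/k}$, the translation identity reduces to shifting the sampling point, and the pointwise case is a reparametrization of the domain; the only cosmetic difference is that you evaluate at an arbitrary $\mathbf v$ instead of at $\mathcal T_{\mathbf p,k}(\mathbf u)$, which is equivalent because $\mathcal T_{\mathbf p,k}$ is a bijection for $k>0$.
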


\begin{proof}
For scale covariance, evaluate the left-hand side at
$\mathcal T_{\mathbf p,k}(\mathbf u)$:
\begin{equation}
\begin{aligned}
    &(\mathcal C_{ks}\mathcal D_{\mathbf p,k}\mathbf h)
    (\mathcal T_{\mathbf p,k}(\mathbf u))\\
    &\quad=\sum_{\boldsymbol\delta\in\mathcal K}
    \mathbf W_{\boldsymbol\delta}
    (\mathcal D_{\mathbf p,k}\mathbf h)
    (\mathcal T_{\mathbf p,k}(\mathbf u)
    +\rho_\ell(ks)\boldsymbol\delta)+\mathbf b\\
    &\quad=\sum_{\boldsymbol\delta\in\mathcal K}
    \mathbf W_{\boldsymbol\delta}\mathbf h\!\left(
    \mathbf u+\frac{\rho_\ell(ks)}{k}\boldsymbol\delta
    \right)+\mathbf b\\
    &\quad=\sum_{\boldsymbol\delta\in\mathcal K}
    \mathbf W_{\boldsymbol\delta}\mathbf h\!\left(
    \mathbf u+\rho_\ell(s)\boldsymbol\delta
    \right)+\mathbf b
    =(\mathcal C_s\mathbf h)(\mathbf u).
\end{aligned}
\end{equation}
The third line applies $\mathcal T_{\mathbf p,1/k}$ to the sampling
coordinate, and the fourth uses $\rho_\ell(ks)=k\rho_\ell(s)$.  By the
definition of $\mathcal D_{\mathbf p,k}$, the final expression equals
$(\mathcal D_{\mathbf p,k}\mathcal C_s\mathbf h)
(\mathcal T_{\mathbf p,k}(\mathbf u))$.  Translation follows from
\begin{equation}
    (\mathcal C_s\mathcal D_{\mathbf t}\mathbf h)(\mathbf u+\mathbf t)
    =\sum_{\boldsymbol\delta\in\mathcal K}
    \mathbf W_{\boldsymbol\delta}
    \mathbf h(\mathbf u+\rho_\ell(s)\boldsymbol\delta)+\mathbf b
    =(\mathcal C_s\mathbf h)(\mathbf u).
\end{equation}
Finally, for a pointwise map $\phi$ and spatial transformation $\mathcal T$ inducing $\mathcal D$,
$\phi(\mathcal D\mathbf h)(\mathbf v)
=\phi(\mathbf h(\mathcal T^{-1}\mathbf v))
=(\mathcal D\phi(\mathbf h))(\mathbf v)$.
\end{proof}

\begin{theorem}[Per-slot raw-logit covariance]
\label{thm:raw-logit-covariance}
Set the coordinate stabilizer to zero.  Let the initial per-slot field be
\begin{equation}
    \mathbf h_{\mathbf a,\mathbf p,s}^{(0)}(\mathbf u)
    =\operatorname{Broadcast}(\mathbf a)
    +\operatorname{PE}\!\left(
      \frac{\mathbf u-\mathbf p}{a_s s}\right),
    \label{eq:continuous-broadcast}
\end{equation}
and let $\ell_{\mathbf a,\mathbf p,s}$ be obtained from this field by any
finite composition of pointwise maps and scale-steered layers satisfying the
assumptions of Lemma~\ref{lem:ssc-commutation}, followed by a pointwise
alpha-logit head.  Then
\begin{equation}
\begin{aligned}
\ell_{\mathbf a,\mathbf p+\mathbf t,s}(\mathbf u+\mathbf t)
    &=\ell_{\mathbf a,\mathbf p,s}(\mathbf u),\\
\ell_{\mathbf a,\mathbf p,ks}(\mathcal T_{\mathbf p,k}(\mathbf u))
    &=\ell_{\mathbf a,\mathbf p,s}(\mathbf u)
\end{aligned}
\label{eq:decoder-raw-logit-covariance}
\end{equation}
for every $\mathbf t\in\R^2$ and $k>0$.
\end{theorem}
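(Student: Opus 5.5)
We argue by induction over the layers of the decoder. The base case is the initial broadcast field in Eq.~\eqref{eq:continuous-broadcast}; the inductive step uses Lemma~\ref{lem:ssc-commutation}. The key observation is that the two claims in Eq.~\eqref{eq:decoder-raw-logit-covariance} can be restated as operator identities on the whole feature stack:
\begin{equation*}
\mathbf h^{(m)}_{\mathbf a,\mathbf p+\mathbf t,s}=\mathcal D_{\mathbf t}\mathbf h^{(m)}_{\mathbf a,\mathbf p,s},\qquad
\mathbf h^{(m)}_{\mathbf a,\mathbf p,ks}=\mathcal D_{\mathbf p,k}\mathbf h^{(m)}_{\mathbf a,\mathbf p,s},
\end{equation*}
for every intermediate field $\mathbf h^{(m)}$. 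Evaluating the first identity at $\mathbf u+\mathbf t$ and the second at $\mathcal T_{\mathbf p,k}(\mathbf u)$ recovers the statement, because $(\mathcal D_{\mathbf t}\mathbf h)(\mathbf u+\mathbf t)=\mathbf h(\mathbf u)$ and $(\mathcal D_{\mathbf p,k}\mathbf h)(\mathcal T_{\mathbf p,k}(\mathbf u))=\mathbf h(\mathbf u)$.

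\emph{Base case.} With zero stabilizer, the positional argument of the initial field is $(\mathbf u-\mathbf p)/(a_s s)$.
\begin{itemize}
\item Under translation, $(\mathbf u-(\mathbf p+\mathbf t))/(a_s s)$ evaluated at $\mathbf v$ equals the original argument evaluated at $\mathbf v-\mathbf t$.
\item Under scaling, evaluating the argument with $ks$ at $\mathbf v$ gives $(\mathbf v-\mathbf p)/(a_s k s)$. This equals $(\mathcal T_{\mathbf p,1/k}(\mathbf v)-\mathbf p)/(a_s s)$, since $\mathcal T_{\mathbf p,1/k}(\mathbf v)-\mathbf p=(\mathbf v-\mathbf p)/k$.
\end{itemize}
The term $\operatorname{Broadcast}(\mathbf a)$ is spatially constant, so it is fixed by both $\mathcal D_{\mathbf t}$ and $\mathcal D_{\mathbf p,k}$. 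Both identities therefore hold at $m=0$, whatever the function $\operatorname{PE}$ is.

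\emph{Inductive step.} Suppose the identities hold for $\mathbf h^{(m)}$.
\begin{itemize}
\item If layer $m+1$ is pointwise, the last clause of Lemma~\ref{lem:ssc-commutation} lets $\phi$ commute with both actions.
\item If layer $m+1$ is scale-steered, translation is handled directly: the layer is evaluated at the same $s$ on both sides, so $\mathcal C_s\mathcal D_{\mathbf t}=\mathcal D_{\mathbf t}\mathcal C_s$ applies.
\item If layer $m+1$ is scale-steered and we are in the scale case, the layer for the commanded scale $ks$ is $\mathcal C_{ks}$. Then $\mathbf h^{(m+1)}_{ks}=\mathcal C_{ks}\mathcal D_{\mathbf p,k}\mathbf h^{(m)}_s=\mathcal D_{\mathbf p,k}\mathcal C_s\mathbf h^{(m)}_s$ by Eq.~\eqref{eq:layer-commutation}.
\end{itemize}
Finally, the alpha-logit head is pointwise, so the identities pass to $\ell$.

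\emph{Main obstacle.} The delicate point is bookkeeping about which parameters each layer sees. The scale-steered layers must be driven by the same commanded $s$, and their weights must not depend on $\mathbf p$, so that a change of $\mathbf p$ enters only through the broadcast field. The bounded gain $\kappa_\ell$ must depend only on $\psi_\ell$ and not on $s$, so that $\rho_\ell(ks)=k\rho_\ell(s)$ holds exactly; the assumptions of Lemma~\ref{lem:ssc-commutation} already guarantee this. I would state these independence facts explicitly before running the induction. I would also note that any progressive bilinear upsampling is treated as the identity in the continuous setting, where all fields live on $\R^2$ with no boundary truncation.
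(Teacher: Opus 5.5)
Your proposal is correct and follows the same route as the paper. The paper also checks the covariance of the broadcast field at layer zero, then propagates it by induction through each pointwise or scale-steered layer and the pointwise alpha head using Lemma~\ref{lem:ssc-commutation}; your field-level identities $\mathbf h^{(m)}_{\mathbf a,\mathbf p+\mathbf t,s}=\mathcal D_{\mathbf t}\mathbf h^{(m)}_{\mathbf a,\mathbf p,s}$ and $\mathbf h^{(m)}_{\mathbf a,\mathbf p,ks}=\mathcal D_{\mathbf p,k}\mathbf h^{(m)}_{\mathbf a,\mathbf p,s}$ simply spell out the ``corresponding identity'' that the paper's short proof leaves implicit.
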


\begin{proof}
The initial field is translation covariant because
\[
\frac{(\mathbf u+\mathbf t)-(\mathbf p+\mathbf t)}{a_s s}
=\frac{\mathbf u-\mathbf p}{a_s s},
\]
and scale covariant because
\[
\frac{\mathcal T_{\mathbf p,k}(\mathbf u)-\mathbf p}{a_sks}
=\frac{\mathbf u-\mathbf p}{a_ss}.
\]
Thus Eq.~\eqref{eq:decoder-raw-logit-covariance} holds at layer zero.
Lemma~\ref{lem:ssc-commutation} preserves the corresponding identity after
each scale-steered or pointwise layer.  Induction over the finite decoder path,
including the pointwise alpha head, proves the result and hence
Proposition~\ref{prop:raw-logit-equivariance} in the main paper.
\end{proof}

\begin{remark}[Scope]
The theorem is exact only for the ideal continuous per-slot raw-logit
operator.  Finite-grid interpolation, padding, clipping, nonzero numerical
stabilization, and boundary truncation are excluded.  If all slot logits,
including background logits, undergo the same global action, pointwise
softmax also covaries.  Under an intervention that changes only one slot,
the other logits remain fixed and cross-slot full-softmax competition
generally prevents exact alpha equivariance.
\end{remark}

\subsection{Support Moments}
\label{app:moment-proof}

With $\epsilon=0$ and finite, positive mass and radius, assume an isolated continuous support obeys
\begin{equation}
    m_{ks}(\mathcal T_{\mathbf p,k}(\mathbf u))=m_s(\mathbf u).
\end{equation}
The substitution $\mathbf v=\mathcal T_{\mathbf p,k}(\mathbf u)$ has area element $d\mathbf v=k^2d\mathbf u$. It follows that
\begin{equation}
    \boldsymbol\mu(ks)
    =\mathbf p+k[\boldsymbol\mu(s)-\mathbf p],
    \qquad r(ks)=kr(s),
\end{equation}
and $A(ks)=k^2A(s)$ whenever the support is not clipped by the canvas. Consequently,
\begin{equation}
    K(ks)=\frac{k^2A(s)}{k^2r(s)^2}=K(s).
\end{equation}
Thus Eq.~\eqref{eq:geometry-loss} jointly aligns radius and compactness.

\subsection{Minimal Owner-to-Background Transfer}
\label{app:transfer-minimality}

For an accepted pixel, let $d_i=\ell_i-\ell_{\mathrm{bg}}$ and
$d_{\mathrm{alt}}=\ell_{\mathrm{alt}}-\ell_{\mathrm{bg}}$ denote the owner and
strongest alternative foreground gaps.  The minimum symmetric transfer needed
for background to win by margin $m_{\mathrm{bg}}$ is
\begin{equation}
    \Delta_{\mathrm{req}}(x)=\max\!\big\{0,\,
    (d_i+m_{\mathrm{bg}})/2,\,
    d_{\mathrm{alt}}+m_{\mathrm{bg}}\big\}.
    \label{eq:required-transfer-main}
\end{equation}
Its capped update is
\begin{equation}
\begin{aligned}
    \Delta_{\mathrm{tr}}&=\min(\Delta_{\mathrm{req}},\Delta_{\max}),
    &\ell_i'&=\ell_i-\Delta_{\mathrm{tr}},
    &\ell_{\mathrm{bg}}'&=\ell_{\mathrm{bg}}+\Delta_{\mathrm{tr}}.
\end{aligned}
\label{eq:logit-transfer-main}
\end{equation}
The action is zero outside accepted pixels.  Its threshold and cap are fixed
on a calibration panel drawn only from the training split.

Under the symmetric action $\ell_i'=\ell_i-\Delta_{\mathrm{tr}}$ and
$\ell_{\mathrm{bg}}'=\ell_{\mathrm{bg}}+\Delta_{\mathrm{tr}}$, background beats
the current owner by margin $m_{\mathrm{bg}}$ iff
\begin{equation}
\begin{aligned}
    \ell_{\mathrm{bg}}+\Delta_{\mathrm{tr}}
    &\ge \ell_i-\Delta_{\mathrm{tr}}+m_{\mathrm{bg}},\\
    &\Longleftrightarrow\quad
    \Delta_{\mathrm{tr}}
    \ge\frac{\ell_i-\ell_{\mathrm{bg}}+m_{\mathrm{bg}}}{2}.
\end{aligned}
\end{equation}
It beats the strongest untouched foreground competitor iff
\begin{equation}
\begin{aligned}
    \ell_{\mathrm{bg}}+\Delta_{\mathrm{tr}}
    &\ge \ell_{\mathrm{alt}}+m_{\mathrm{bg}},\\
    &\Longleftrightarrow\quad
    \Delta_{\mathrm{tr}}
    \ge \ell_{\mathrm{alt}}-\ell_{\mathrm{bg}}+m_{\mathrm{bg}}.
\end{aligned}
\end{equation}
Together with nonnegativity, Eq.~\eqref{eq:required-transfer-main} is the
smallest symmetric transfer satisfying both margins.
Equation~\eqref{eq:logit-transfer-main} applies its capped value and satisfies
them whenever the cap does not bind.

\section{Teacher--Selector Details}
\label{app:teacher-details}

\subsection{Background and Deployment Domain}

Let $\partial\Omega$ be a narrow image border. The background slot is selected from detached alpha by
\begin{equation}
    i_{\mathrm{bg}}=\arg\max_i\left[
    \frac{1}{|\partial\Omega|}\sum_{x\in\partial\Omega}\alpha_i(x)
    +\lambda_{\mathrm{mass}}\frac{1}{|\Omega|}
    \sum_{x\in\Omega}\alpha_i(x)
    \right].
    \label{eq:bg-selection}
\end{equation}
The deployment domain contains only foreground-owned pixels whose owner has sufficient hard area and peak alpha and whose required transfer does not exceed a labeling limit. These conditions are detached and are identical when producing teacher labels and applying the trained selector.

\subsection{High-Precision Tri-State Evidence}

For an owner $i$ and background $i_{\mathrm{bg}}$, define the individual renderer errors
\begin{align}
    E_i(x)&=\frac{1}{3}\|\mathbf y_i(x)-\mathbf x(x)\|_2^2,\\
    E_{\mathrm{bg}}(x)&=\frac{1}{3}
    \|\mathbf y_{i_{\mathrm{bg}}}(x)-\mathbf x(x)\|_2^2.
\end{align}
Let $E_{\mathrm{cur}}(x)$ and $E_{\mathrm{act}}(x)$ be the composite errors before and after the analytic transfer. We use
\begin{align}
G_{\mathrm{act}}(x)
&=\frac{E_{\mathrm{cur}}(x)-E_{\mathrm{act}}(x)}
{E_{\mathrm{cur}}(x)+E_{\mathrm{act}}(x)+\epsilon},
\label{eq:teacher-gain}\\
G_{\mathrm{ind}}(x)
&=\frac{E_i(x)-E_{\mathrm{bg}}(x)}
{E_i(x)+E_{\mathrm{bg}}(x)+\epsilon},
\label{eq:individual-advantage}
\end{align}
where positive $G_{\mathrm{ind}}$ favors background. Pixels for which the action is demonstrably harmful or the foreground renderer is substantially better form the protected set,
\begin{equation}
    c_-(x)=\mathbb I[G_{\mathrm{act}}(x)\le \theta_{\mathrm{act}}^-]
    \lor \mathbb I[G_{\mathrm{ind}}(x)\le\theta_{\mathrm{ind}}^-].
    \label{eq:protected-teacher}
\end{equation}

The positive teacher combines two complementary high-precision branches.  The
first, $c^+_{\mathrm{act}}$, requires a positive counterfactual action gain and
conservative attention/alpha-density checks.  The second,
$c^+_{\mathrm{local}}$, uses a local target-image background prototype to
break the feedback loop in which a pixel captured by foreground never trains
the background renderer.  Let $i(u)=\arg\max_k\alpha_k(u)$ denote the decoded
hard owner at pixel $u$.  For a neighborhood $\mathcal N(x)$,
\begin{equation}
    \boldsymbol\mu_{\mathrm{local}}(x)=
    \frac{\sum_{u\in\mathcal N(x)}
    \mathbb I[i(u)=i_{\mathrm{bg}}]\mathbf x(u)}
    {\sum_{u\in\mathcal N(x)}
    \mathbb I[i(u)=i_{\mathrm{bg}}]+\epsilon},
\end{equation}
\begin{equation}
    E_{\mathrm{local}}(x)=\frac{1}{3}
    \|\mathbf x(x)-\boldsymbol\mu_{\mathrm{local}}(x)\|_2^2.
\end{equation}
The branch requires enough background-owned neighbors, a low normalized local
error, and a feasible transfer magnitude.  The raw positive confidence and
conflict-resolved label are
\begin{equation}
\begin{aligned}
    c_+(x)&=\max\{c_{\mathrm{act}}^+(x),c_{\mathrm{local}}^+(x)\},\\
    \widetilde c_+(x)&=c_+(x)[1-c_-(x)].
\end{aligned}
    \label{eq:teacher-tristate}
\end{equation}

\subsection{Selector Features and Ranking Loss}

The raw pointwise evidence has 139 dimensions: eight ownership/logit/density scalars, a 64-dimensional owner feature, a 64-dimensional owner--background feature difference, and a three-dimensional equivariant RGB prediction difference. One shared $1\times1$ projection maps the owner feature to eight dimensions and, with the same weights and canceled bias, maps the owner--background difference to eight dimensions, giving a 27-dimensional selector input. With SiLU activation $\phi$,
\begin{equation}
\begin{aligned}
    \mathbf r_1(x)&=\phi(\mathbf W_1\mathbf f(x)+\mathbf b_1),\\
    \mathbf r_2(x)&=\phi(\mathbf W_2\mathbf r_1(x)+\mathbf b_2),\\
    \pi_{\mathrm{sel}}(x)&=\sigma(\mathbf w_3^\top\mathbf r_2(x)+b_3).
\end{aligned}
\label{eq:selector}
\end{equation}
Hidden widths 32 and 16 yield 1,961 trainable parameters.

Let $\nu(x)$ be the selector logit. Class losses are macro-averaged over nonempty scene--owner groups. For groups with both positive and protected pixels, soft extrema
\begin{align}
    \nu_+^{\min}
    &=-T_{\mathrm{rank}}\log
    \frac{\sum_x\widetilde c_+(x)e^{-\nu(x)/T_{\mathrm{rank}}}}
    {\sum_x\widetilde c_+(x)+\epsilon},\\
    \nu_-^{\max}
    &=T_{\mathrm{rank}}\log
    \frac{\sum_x c_-(x)e^{\nu(x)/T_{\mathrm{rank}}}}
    {\sum_x c_-(x)+\epsilon}
\end{align}
define
\begin{equation}
    \mathcal L_{\mathrm{rank}}
    =T_{\mathrm{rank}}\operatorname{softplus}\!\left(
    \frac{m_{\mathrm{rank}}+\nu_-^{\max}-\nu_+^{\min}}
    {T_{\mathrm{rank}}}
    \right).
\end{equation}
Ranking averages only these groups, with zero loss if none qualify. Unknown pixels contribute zero selector gradient.
With group-balanced positive and protected logistic losses $\mathcal L_+$ and $\mathcal L_-$, the complete objective is
\begin{equation}
    \mathcal L_{\mathrm{sel}}
    =\mathcal L_+ +\lambda_-\mathcal L_-
    +\lambda_{\mathrm{rank}}\mathcal L_{\mathrm{rank}}.
    \label{eq:selector-loss}
\end{equation}

\section{Implementation and Curriculum Details}
\begingroup
\setlength{\parskip}{3pt plus 1pt}
\label{app:training-details}

All experiments reported in the main paper use one NVIDIA A800 80GB GPU.

\subsection{Terminal Appearance Readout}
\label{app:terminal-readout}

ISA's final value aggregate is converted by the shared recurrent update into
an appearance proposal $\widetilde{\mathbf a}_i$.  The bounded residual readout is
\begin{equation}
\begin{aligned}
    \mathbf a_i^{\mathrm{out}}
    &=\mathbf a_i+\mathbf g_i\odot
    (\widetilde{\mathbf a}_i-\mathbf a_i),\\
    \mathbf g_i
    &=g_{\mathrm{read}}\tanh\!\left(
    \mathbf W_g\operatorname{LN}
    [\mathbf a_i;\widetilde{\mathbf a}_i-\mathbf a_i]+\mathbf b_g\right).
\end{aligned}
    \label{eq:terminal-readout}
\end{equation}
The bounded gate $\mathbf g_i$ is initialized to zero, giving an identity appearance update.
The returned $\mathbf a_i^{\mathrm{out}}$ enters both decoder branches and is
subsequently abbreviated as $\mathbf a_i$.  It refines the appearance representation used for RGB reconstruction
without directly updating $\mathbf p_i$ or $s_i$.

\subsection{Huber Loss}
\label{app:huber-loss}

For a scalar residual $e$ and threshold $\delta>0$, the notation used in
the main paper is
\begin{equation}
    \Huber_\delta(e)=
    \begin{cases}
        \tfrac{1}{2}e^2, & |e|\le\delta,\\
        \delta\left(|e|-\tfrac{1}{2}\delta\right),
        & |e|>\delta.
    \end{cases}
    \label{eq:huber-definition}
\end{equation}
For $\mathbf e\in\R^d$, the overlined form denotes the coordinate mean,
\begin{equation}
    \overline{\Huber}_\delta(\mathbf e)
    =\frac{1}{d}\sum_{q=1}^{d}\Huber_\delta(e_q).
    \label{eq:huber-vector}
\end{equation}

\subsection{Center and Tail Penalties}
\label{app:center-tail-losses}

For the center residual $\boldsymbol\Delta=\boldsymbol\Delta_c^{ij}$ in Eq.~\eqref{eq:geometry-loss}, recipient scale $s$, and global training step $t$, define
\begin{equation}
\mathcal L_{\mathrm{ctr}}(\boldsymbol\Delta;s,t)=
\begin{cases}
\frac12\sum_{d\in\{x,y\}}\Huber_{\delta_c}(\Delta_d),
&\text{coordinate-wise},\\[3pt]
\Huber_{\delta_c}\!\left(\frac{\|\boldsymbol\Delta\|_2}{s+\epsilon}\right),
&\text{scale-normalized}.
\end{cases}
\end{equation}
MOVi-C uses the coordinate-wise form throughout geometry training. Obj3D uses the coordinate-wise form before step 50,000 and the scale-normalized form thereafter, with $\delta_c=0.05$. Recipient commands remain detached. The main text abbreviates this penalty as $\mathcal L_{\mathrm{ctr}}(\boldsymbol\Delta_c^{ij})$, suppressing its scale and training-step dependence; the factual position loss is unchanged.

The tail penalty acts on gamma-2 OneFG support $q(x)=\operatorname{sigmoid}(2z(x))$, where $z$ is the target logit minus the aggregated background logit. For pixel grid $\Omega$, let
\begin{equation}
T(q)=-\frac{1}{|\Omega|}\sum_{x\in\Omega}
\sg\!\left[\mathbf 1\{q(x)<\tau\}\right]\log(1-q(x)),
\qquad \tau=0.25.
\end{equation}
The implementation evaluates $-\log(1-q)$ stably as $\operatorname{softplus}(2z)$. The detached threshold selects low-confidence pixels only; no positive-target term sharpens the remaining pixels. We combine factual and counterfactual penalties as
\begin{equation}
\mathcal L_{\mathrm{tail}}=\tfrac12\mathbb E_{\mathrm{FG}}[T(q^{\mathrm{fact}})]
+\tfrac12\mathbb E_{(i,j)\sim\omega}[T(q^{\mathrm{cf}}_{i\leftarrow j})].
\end{equation}
The factual mean uses accepted foreground slots with a nonempty background set; the counterfactual mean uses the detached pair-confidence weights of the geometry objective and the recipient's factual background logits. Spatial means are taken over the full canvas. Empty eligible populations contribute zero. Obj3D activates $\lambda_{\mathrm{tail}}=10^{-3}$ at step 60,000 without an additional ramp; MOVi-C uses $\lambda_{\mathrm{tail}}=0$.

\subsection{Discrete Scale-Steered Convolution}
\label{app:discrete-ssc}

Let $\mathbf H\in\R^{C_{\mathrm{in}}\times H_\ell\times W_\ell}$ be a discrete feature map and let $\mathbf q\in\mathbb Z^2$ index an output pixel. In feature-pixel units, the commanded tap dilation is
\begin{equation}
    d_\ell(s)=\frac{s}{s_{\mathrm{ref}}}\kappa_\ell.
    \label{eq:discrete-dilation}
\end{equation}
The implemented scale-steered cross-correlation is
\begin{equation}
\begin{aligned}
    (\widehat{\mathcal C}_s\mathbf H)_o[\mathbf q]
    ={}&b_o+\sum_{c=1}^{C_{\mathrm{in}}}
    \sum_{\boldsymbol\delta\in\mathcal K}
    W_{o,c,\boldsymbol\delta}
    \operatorname{Bilin}\!\left(
    \mathbf H_c,\mathbf q+d_\ell(s)\boldsymbol\delta
    \right).
\end{aligned}
\label{eq:discrete-ssc}
\end{equation}
Here $\mathbf W=[W_{o,c,\boldsymbol\delta}]$ is the learned kernel tensor and $\operatorname{Bilin}$ is bilinear sampling on the current feature grid. Pixel centers are converted to normalized $[-1,1]^2$ coordinates with aligned corner centers; samples outside the finite canvas use zero padding. When $s=s_{\mathrm{ref}}$ and $\kappa_\ell=1$, all taps are integer-aligned and Eq.~\eqref{eq:discrete-ssc} reduces to the corresponding ordinary stride-one convolution up to floating-point interpolation error.

Each rendered slot retains its own scalar $s$. The implementation constructs all $|\mathcal K|$ analytic tap locations directly from Eq.~\eqref{eq:discrete-dilation}; no network predicts deformable offsets. The taps are evaluated jointly by vectorized bilinear sampling, followed by the same channel contraction as a conventional convolution. This implementation and the reference per-tap implementation are algebraically identical, and gradients propagate through the sampler to the input field, scale command, and bounded layer gain.

\subsection{Decoder Topology}
\label{app:decoder-topology}

Following spatial broadcast~\citep{spatialbroadcast}, slot $i$ induces
\begin{equation}
    \mathbf h_i^{(0)}(\mathbf u)
    =\operatorname{Broadcast}(\mathbf a_i)
    +\operatorname{PE}\!\left(
    \frac{\mathbf u-\mathbf p_i}{a_s s_i+\epsilon}\right).
    \label{eq:broadcast-field}
\end{equation}
A learned pointwise embedding $\operatorname{PE}$ is injected once, after
which progressive bilinear synthesis alternates with bilinearly sampled
scale-steered convolutions.  The pointwise alpha head is
\begin{equation}
    \ell_i(\mathbf u)=\mathbf w_\alpha^\top
    \mathbf h_i^{\mathrm{eq}}(\mathbf u)+b_\alpha,
    \label{eq:alpha-head}
\end{equation}
so every learned spatial operation on its path is scale-steered.  RGB uses
a pointwise fusion of $\mathbf h_i^{\mathrm{eq}}$ with broadcast appearance
to form $\mathbf h_i^{\mathrm{rgb}}$, followed by
\begin{equation}
    \mathbf y_i(\mathbf u)=\sigma\!\left(
    \mathbf y_i^{\mathrm{eq}}(\mathbf u)
    +\eta_{\max}\tanh\!\left[
    \mathcal C_{3\times3}^{\mathrm{micro}}
    \mathbf h_i^{\mathrm{rgb}}(\mathbf u)\right]\right),
    \label{eq:rgb-head}
\end{equation}
where the fixed-pixel micro convolution is zero-initialized and the residual
is bounded by $\eta_{\max}$.  We enable it for Obj3D RGB; being bounded, it
refines appearance while leaving the alpha and geometry paths untouched.  Here
$\mathbf h_i^{\mathrm{eq}}$ is the final equivariant feature field and
$\mathbf y_i^{\mathrm{eq}}$ its scale-steered RGB-logit readout.  Sampling,
padding, clipping, aliasing, and cross-slot competition are the finite-grid
approximation boundaries.

The Obj3D instantiation broadcasts each slot to $16\times16$. It applies bilinear $2\times$ synthesis followed by a $5\times5$ scale-steered convolution at $32\times32$, another bilinear synthesis followed by a $3\times3$ scale-steered convolution at $64\times64$, and three $3\times3$ scale-steered blocks at the final resolution. The alpha head is $1\times1$. The RGB path pointwise-fuses the equivariant field with a spatially constant appearance style before its scale-steered head; only its bounded, zero-initialized micro branch uses a fixed-pixel $3\times3$ convolution. Progressive synthesis avoids the checkerboard phase preference of zero-insertion while retaining learned scale-dependent aggregation after every resolution change.

The global reference scale is continued during cold start without breaking the homogeneity of Eq.~\eqref{eq:ssc}. Abbreviate $s_c=s_{\mathrm{cold}}$ and $s_r=s_{\mathrm{ref}}$. With update $t$,
\begin{equation}
    s_{\mathrm{ref}}^{\mathrm{eff}}(t)=
    \begin{cases}
    s_c,&0\le t<2{,}000,\\
    s_c^{1-\beta_t}s_r^{\beta_t},
    &2{,}000\le t<10{,}000,\\
    s_r,&t\ge10{,}000,
    \end{cases}
\end{equation}
where $\beta_t=\operatorname{clip}_{[0,1]}((t-1999)/8000)$,
$s_{\mathrm{cold}}$ is the RMS scale of uniform attention on the initial grid,
and $s_{\mathrm{ref}}$ is the final operator gauge. At every fixed update $t$,
$\rho_{\ell,t}(ks)=k\rho_{\ell,t}(s)$ still holds for every layer $\ell$.

\subsection{Decoder Implementations}
\label{app:canonical-canvas}

We evaluate scale-steered and canonical-canvas slot decoding within
the GeoCo-SAVi geometry-control formulation. The canonical-canvas implementation
follows the local-patch transformation approach of AIR and SPACE~\citep{air,space}.
The decoder produces RGB and alpha logits on a fixed $64\times64$ canvas
at $p=0$ and $s_0=0.2$, using ordinary convolutions of the same widths and
kernel sizes. A final bilinear warp samples at $(\mathbf x-p)s_0/s$,
with aligned corners; out-of-canvas RGB and alpha logits are filled with
0 and $-20$, respectively. RGB sigmoid and slotwise alpha normalization
follow the warp. Both use factual and counterfactual calibration.

\begin{table}[htbp]
\centering
\small
\papertablecaption{Obj3D decoder implementations within GeoCo-SAVi at 70k updates. $\mathrm{PSNR}_{\mathrm{pos/scl/app}}$ measures full-image agreement with independent RGB edit targets (Sec.~\ref{app:obj3d-edit-validation}). F-scores are percentages; PSNR is in dB.}
\label{tab:canonical-canvas}
\setlength{\tabcolsep}{1pt}
\renewcommand{\arraystretch}{1.08}
\begin{tabularx}{\textwidth}{l>{\hsize=.85\hsize\linewidth=\hsize\centering\arraybackslash}X>{\hsize=1.2\hsize\linewidth=\hsize\centering\arraybackslash}X>{\hsize=1.2\hsize\linewidth=\hsize\centering\arraybackslash}X>{\hsize=1.2\hsize\linewidth=\hsize\centering\arraybackslash}X>{\hsize=.85\hsize\linewidth=\hsize\centering\arraybackslash}X>{\hsize=.85\hsize\linewidth=\hsize\centering\arraybackslash}X>{\hsize=.85\hsize\linewidth=\hsize\centering\arraybackslash}X}
\toprule
\rowcolor{tableheadgray}
Decoder & PSNR$\uparrow$ & $\mathrm{PSNR}_{\mathrm{pos}}\uparrow$ & $\mathrm{PSNR}_{\mathrm{scl}}\uparrow$ & $\mathrm{PSNR}_{\mathrm{app}}\uparrow$ & $F_{\mathrm{pos}}\uparrow$ & $F_{\mathrm{scl}}\uparrow$ & $F_{\mathrm{app}}\uparrow$ \\
\midrule
Scale-steered & 44.210 & 39.914 & 40.176 & 35.728 & 93.29 & 92.72 & 91.09 \\
Canonical-canvas & 43.501 & 39.568 & 39.945 & 36.148 & 93.00 & 93.94 & 89.17 \\
\bottomrule
\end{tabularx}
\par\medskip
\begin{tabularx}{\textwidth}{l>{\hsize=.85\hsize\linewidth=\hsize\centering\arraybackslash}X>{\hsize=1.2\hsize\linewidth=\hsize\centering\arraybackslash}X>{\hsize=1.2\hsize\linewidth=\hsize\centering\arraybackslash}X>{\hsize=1.2\hsize\linewidth=\hsize\centering\arraybackslash}X>{\hsize=.85\hsize\linewidth=\hsize\centering\arraybackslash}X>{\hsize=.85\hsize\linewidth=\hsize\centering\arraybackslash}X>{\hsize=.85\hsize\linewidth=\hsize\centering\arraybackslash}X}
\toprule
\rowcolor{tableheadgray}
Decoder & $E_{p\rightarrow c^\star}\downarrow$ & $E_{\hat c\rightarrow c^\star}\downarrow$ & $O_{\mathrm{attn}}\downarrow$ & $E_{\Delta p}\downarrow$ & $\beta_r\to1$ & $\beta_A\to2$ & $D_{\mathrm{app}}\downarrow$ \\
\midrule
Scale-steered & 0.0436 & 0.0391 & 0.0007 & 0.0051 & 1.0339 & 2.0225 & 0.0124 \\
Canonical-canvas & 0.0451 & 0.0428 & 0.0007 & 0.0046 & 0.9763 & 1.9315 & 0.0097 \\
\bottomrule
\end{tabularx}
\end{table}

The canonical-canvas implementation achieves higher transplant oracle PSNR
and $F_{\mathrm{scl}}$, with lower translation error and transplant drift.
The scale-steered implementation achieves higher reconstruction PSNR,
position and scale oracle PSNR, $F_{\mathrm{pos}}$, and $F_{\mathrm{app}}$.
We use the scale-steered implementation
in the main experiments to balance reconstruction quality and geometric control.

\subsection{Appearance-Transplant Pairing}
\label{app:appearance-pairing}

Each recipient scene replaces only one foreground slot, avoiding simultaneous changes to several footprints and their background competition. Pair eligibility is computed entirely from detached factual decompositions. We exclude inactive or degenerate supports, strong boundary truncation, and excessive foreground overlap. For MOVi-C, a detached factual-coherence refinement additionally requires OneFG compactness at least 1, at least 8 effective attention pixels, and IoU at least 0.6 between hard slot ownership and its gamma-2 OneFG support thresholded at 0.5. Obj3D uses no additional coherence refinement. The pair confidence $\omega_{ij}$ combines the base factual quality terms; it never observes the counterfactual outcome, so a poor transplant cannot escape supervision by invalidating its own pair.

This self-supervised factual filtering constructs training losses, where
privileged input-side object cores are unavailable.  Evaluation populations
and metric denominators are specified separately in
Sec.~\ref{app:obj3d-evaluation}.

The compactness target in the main paper implies the coverage relation
\begin{equation}
    A_{i\leftarrow j}^{\mathrm{target}}
    =\sg\!\left(K(m_j^{\don})\right)
     \sg\!\left(r(m_i^{\rec})\right)^2.
    \label{eq:implied-coverage}
\end{equation}
Thus recipient geometry commands spatial extent, while donor appearance may retain its fill density, transparency, and occlusion behavior.

\subsection{Single-Frame Schedule}
\label{app:single-frame-schedule}

The Obj3D curriculum uses 70k steps of single-frame training with random position initialization. Optimization uses AdamW with batch size 64, base learning rate $4\times10^{-4}$, zero weight decay, BF16 AMP, global gradient-norm clipping at $0.05$, a 10k-step linear warm-up followed by cosine decay to zero at 70k, and decoder compilation:
\begin{itemize}
    \item steps 0--15k: reconstruction warm-up, with the terminal-gate regularizer;
    \item steps 15k--40k: ramp and retain $\mathcal L_{\mathrm{pos}}$ and $\mathcal L_{\mathrm{ov}}$;
    \item steps 40k--50k: retain factual losses and add counterfactual appearance transplantation with $\mathcal L_{\mathrm{geo}}$, using the coordinate-wise center penalty;
    \item steps 50k--60k: retain these objectives and switch to the scale-normalized center penalty;
    \item steps 60k--70k: additionally enable $\mathcal L_{\mathrm{tail}}$ with $\lambda_{\mathrm{tail}}=10^{-3}$, without an additional ramp.
\end{itemize}
The center and tail penalties are defined in Sec.~\ref{app:center-tail-losses}.
The reported Obj3D configuration uses $\lambda_{\mathrm{pos}}=0.2$,
$\lambda_{\mathrm{ov}}=0.01$, $\lambda_{\mathrm{geo}}=0.002$,
$\lambda_K=0.25$, and $\lambda_{\mathrm{gate}}=10^{-4}$. Auxiliary losses
other than the late tail penalty enter through ramps near their activation boundaries. In the main trajectory,
the optional selector is jointly
trained from 35k with its disjoint AdamW optimizer~\citep{adamw} and is applied from 36.2k
with a ramped transfer cap. Its threshold and cap were selected on a
calibration panel drawn from the training split; no validation
or evaluation sample entered this panel. The selected operating point is then
used for subsequent training. The 40k diagnostic in
Fig.~\ref{fig:obj3d-appswap} comes from a separate controlled run using
reconstruction, $\mathcal L_{\mathrm{pos}}$, and $\mathcal L_{\mathrm{ov}}$
only. The diagnostic omits teacher--selector activation, counterfactual
geometry, and temporal initialization.
The selector's class/ranking weights are $\lambda_-=2$ and
$\lambda_{\mathrm{rank}}=0.25$.

\subsection{Temporal Initializer Semantics}
\label{app:temporal-initializer}

The explicit update is
\begin{equation}
\begin{aligned}
    (\delta\mathbf a,\delta\mathbf p,\delta\ell)_{t,i}
      &=\mathbf W_{\mathrm{init}}
        \bigl(\mathcal F_\Theta(\mathbf Z_{<t})\bigr)_i,\\
    \mathbf a_{t,i}^{(0)}
      &=\mathbf a_{t-1,i}+\delta\mathbf a_{t,i},\\
    \mathbf p_{t,i}^{(0)}
      &=\operatorname{clip}
        (\mathbf p_{t-1,i}+\delta\mathbf p_{t,i}),\\
    s_{t,i}^{(0)}
      &=\operatorname{clip}
        (s_{t-1,i}\exp(\delta\ell_{t,i})).
\end{aligned}
\label{eq:temporal-initializer}
\end{equation}
\begingroup
\emergencystretch=2em
The temporal module consumes the raw slot state $\mathbf z=[\mathbf a;\mathbf p;s]$ and predicts an initialization residual, not a separately evaluated future state. Its zero-initialized final projection returns $(\delta\mathbf a,\delta\mathbf p,\delta\ell)$: appearance uses an additive update, position a bounded additive update, and scale $s^{(0)}=\operatorname{clip}(s\exp(\delta\ell))$. Thus the update is additive in log-scale even though the Transformer input is raw scale. The residual adjusts appearance and geometry for next-frame ISA refinement. The time--slot Transformer~\citep{statm,attention} preserves the slot axis, mixes scene context through spatial attention, and returns one residual per slot. Ordered carry-forward provides slot-wise temporal correspondence for current-frame ISA refinement. Next-frame reconstruction jointly trains the Transformer and final projection through current-frame ISA refinement. Freezing or jointly adapting the calibrated single-frame core is a benchmark protocol choice.

\subsection{Gradient Routing Summary}
\label{app:gradient-routing}

Gradient routing is defined as follows:
\begin{itemize}
    \item $\mathcal L_{\mathrm{rec}}$ updates the full core.
    \item $\mathcal L_{\mathrm{ov}}$ keeps attention live.
    \item $\mathcal L_{\mathrm{pos}}$ keeps the OneFG target and selected-background logit paths live but detaches the position and scale commands, background-slot identities, and factual validity mask.
    \item $\mathcal L_{\mathrm{geo}}$ keeps the counterfactual alpha path, donor appearance, and recipient factual background logits live while detaching recipient commands, factual moments, background-slot identities, pair selection, and confidence.
    \item $\mathcal L_{\mathrm{tail}}$ detaches its threshold mask, foreground/background identities, and pair-confidence weights, while keeping target and background logits live. Its factual term follows the detached-command position-loss route; its counterfactual term uses donor appearance with detached recipient commands and the recipient's live factual background logits.
    \item teacher construction, selector inputs, and deployment eligibility are detached; $\mathcal L_{\mathrm{sel}}$ updates only selector parameters through a disjoint optimizer.
\end{itemize}
\endgroup

\section{Shared Evaluation Metrics and Notation}
\label{app:metric-definitions}

This section collects the metric notation shared by Obj3D and MOVi-C.
The populations, support roles, boundary scopes, and aggregation
rules are stated in the corresponding evaluation sections.

\paragraph{Decoded-support geometry.}
For a hard decoded support $\widehat M_i$ on the endpoint-aligned grid
$\mathbf q(\mathbf u)\in[-1,1]^2$, let
\begin{equation}
\begin{gathered}
 n_i=\sum_{\mathbf u}\widehat M_i(\mathbf u),\qquad
 \widehat{\mathbf c}_i=
 \frac{\sum_{\mathbf u}\widehat M_i(\mathbf u)\mathbf q(\mathbf u)}{n_i},\\
 \widehat r_i^2=
 \frac{\sum_{\mathbf u}\widehat M_i(\mathbf u)
 \|\mathbf q(\mathbf u)-\widehat{\mathbf c}_i\|_2^2}{n_i},\qquad
 \widehat A_i=\frac{n_i}{HW}.
\end{gathered}
\label{eq:shared-support-moments}
\end{equation}
Thus $\widehat{\mathbf c}_i$, $\widehat r_i$, and $\widehat A_i$ are the
decoded centroid, normalized-grid RMS radius, and coverage fraction.
Centroids are defined for supports with positive mass; radius and coverage are zero for zero-mass supports.

\paragraph{Factual geometry and allocation.}
Let $\mathbf p_i$ be the explicit position and $\mathbf c_i^\star$ an external
reference center.  The three factual position errors are
\begin{equation}
\begin{aligned}
 E_{p\rightarrow\hat c}&=\|\mathbf p_i-\widehat{\mathbf c}_i\|_2, &
 E_{p\rightarrow c^\star}&=\|\mathbf p_i-\mathbf c_i^\star\|_2,\\
 E_{\hat c\rightarrow c^\star}
 &=\|\widehat{\mathbf c}_i-\mathbf c_i^\star\|_2 .
\end{aligned}
\label{eq:shared-position-errors}
\end{equation}
All three use L2 distance on the normalized $[-1,1]^2$ grid.  For the set
$\mathcal I_x$ of attention maps participating in an example or frame $x$,
write $N_x=|\mathcal I_x|$ and spatially normalize each map so that
$\sum_n w_i(\mathbf u_n)=1$.  Normalized attention overlap is
\begin{equation}
 O_{\mathrm{attn}}(x)=
 \frac{1}{N_x(N_x-1)}
 \sum_{\substack{i,j\in\mathcal I_x\\i\ne j}}
 \sum_n w_i(\mathbf u_n)w_j(\mathbf u_n).
\label{eq:shared-attention-overlap}
\end{equation}
The reported $O_{\mathrm{attn}}$ averages this ordered off-diagonal overlap
over the stated evaluation population.

At fixed geometry $g=(\mathbf p,s)$, pooling all eligible decoded
appearances gives, for $x\in\{\widehat r,\widehat A\}$,
\begin{equation}
 \sigma_{x\mid g}=
 \sqrt{J^{-1}\sum_{j=1}^{J}(x_j-\bar x)^2}.
\label{eq:shared-fixed-geometry-std}
\end{equation}
Here $J$ is the number of defined measurements in the full population.
Each object--frame record has equal weight; $\bar x$ is its population mean.

\paragraph{Editing command adherence.}
For an edited mask $M_e$, let $M_t$ be the corresponding factual decoded
mask transformed by the editing command. Editing F1 is
\begin{equation}
 F_1(M_e,M_t)=\frac{2|M_e\cap M_t|}{|M_e|+|M_t|}.
\label{eq:shared-editing-f1}
\end{equation}
$F_{\mathrm{pos}}$, $F_{\mathrm{scl}}$, and $F_{\mathrm{app}}$ denote the
corresponding averages for position, scale, and appearance transplantation.
F1 is evaluated on valid edits (Sec.~\ref{app:obj3d-evaluation});
an empty edited support receives zero.
For a commanded pixel translation $\mathbf d_{\mathrm{px}}$, translation error is
\begin{equation}
 E_{\Delta p}=
 \frac{\|(
 \mathbf c_{\mathrm{px}}^e-\mathbf c_{\mathrm{px}}^f)
 -\mathbf d_{\mathrm{px}}\|_2}{\sqrt{H^2+W^2}},
\label{eq:shared-translation-error}
\end{equation}
where superscripts $f$ and $e$ denote factual and edited outputs.
Appearance-transplant drift is
\begin{equation}
 D_{\mathrm{app}}=
 \|\widehat{\mathbf c}_r^e-\widehat{\mathbf c}_r^f\|_2,
\label{eq:shared-appearance-drift}
\end{equation}
measured on the normalized grid while recipient geometry is fixed.

For a five-command scale curve, ordinary least squares with an intercept fits
$\log\widehat r(k)$ or $\log\widehat A(k)$ against $\log k$.  The resulting
slopes $\beta_r$ and $\beta_A$ have ideal values one and two.

For a video, the tube form pools intersection and support area over its $T$
frames before taking the ratio:
\begin{equation}
 F_{\mathrm{tube}}=
 \frac{2\sum_{t=1}^{T}|M_t^e\cap M_t^{\mathrm{ideal}}|}
 {\sum_{t=1}^{T}(|M_t^e|+|M_t^{\mathrm{ideal}}|)}.
\label{eq:shared-tube-f1}
\end{equation}

\endgroup
\section{Obj3D Evaluation Protocol and Extended Results}
\label{app:obj3d-evaluation}

\subsection{Models, Validation Population, and Support Roles}
The ISA baseline uses the scalar isotropic scale in Eq.~\eqref{eq:isa-moments}.
ISA, SlotAug, and GeoCo-SAVi use six slots with 64-dimensional appearance
states and matched encoder/decoder capacity, training exposure, and RGB
reconstruction resolution. SlotAug retains its published objectives and
commands; command normalization is applied in a subsequent hidden layer
to preserve instruction information. Reconstruction is evaluated with four
inference initializations. The geometry experiments below use the first frame
of all 2,600 validation windows, batch size four, and initialization seed
$42+b$ for batch index $b$. The validation-loader seed is 20260717.

\emph{Training support} is obtained from the model's decoder and attention
outputs. \emph{Measurement support} is the decoded mask,
\[
 \widehat M_i(\mathbf u)=\mathbb I[i=\arg\max_j\alpha_j(\mathbf u)].
\]
\emph{Reference support} denotes an RGB pseudo-label or a visible GT mask
used for the separate external position evaluation. Measured centroid,
radius, and coverage follow
Eq.~\ref{eq:shared-support-moments} on the endpoint-aligned grid.
The background slot maximizes factual mean alpha. We consider objects with
at least ten native-canvas support pixels to be valid, excluding negligible
supports. Valid edits additionally require the ideal transformed support
to meet the same criterion. Evaluation retains valid non-background objects
on the native $64\times64$ canvas. Background identity and eligibility
are fixed before intervention.

We report two scopes: \emph{all valid objects}, and
\emph{ideal-boundary-feasible objects}. The latter includes valid edits
whose factual and ideal transformed supports both avoid the outermost
pixel band. Feasibility is evaluated on transformed occupied pixel-cell bounds
before canvas clipping. For transplantation, both donor and recipient
factual supports and the transformed donor must satisfy this condition.
A scale curve is boundary-feasible only when all five commands are feasible.
Actual edited masks do not determine eligibility.

\subsection{Editing Commands and Aggregation}
\label{app:edit-mask-f1}
All interventions decode frozen factual slots. Translation uses four
pixel commands $(\pm6,0)$ and $(0,\pm6)$, converted to grid displacements
$\boldsymbol\delta=2\mathbf d_{\rm px}/63$. Scaling uses
$k\in\{0.5,0.75,1,1.25,1.5\}$. The ideal coordinate maps are
\[
 T_{\rm pos}(\mathbf q)=\mathbf q+\boldsymbol\delta,\qquad
 T_{\rm scl}(\mathbf q)=\mathbf p_i+k(\mathbf q-\mathbf p_i).
\]
ISA and GeoCo-SAVi modify the explicit position or scale; SlotAug uses its
native command-and-identity decoding path. Its scale translation component
compensates for the difference between its attention centroid and the
explicit position, so the ideal scale target is centered at $\mathbf p_i$.

For appearance transplantation, the donor is the next window from a
different source in cyclic validation order, with the same foreground-slot
rank. Both donor and recipient must be valid objects. The donor's
64 appearance coordinates replace the recipient's appearance while
recipient $(\mathbf p_r,s_r)$ and other slots are fixed. This is a
representation-level transplantation diagnostic for SlotAug as well.
The ideal donor-mask map is
\[
 T_{\rm app}(\mathbf q)=\mathbf p_r+\frac{s_r}{s_d}(\mathbf q-\mathbf p_d).
\]
Ideal binary masks are inverse-warped with nearest-neighbor sampling,
endpoint alignment, and zero padding. Donor cropping and occlusion remain
in its factual support.

Editing F-scores follow Eq.~\ref{eq:shared-editing-f1}. Scale F1 averages
the four nonidentity commands. The no-op
baseline scores the unchanged recipient mask against the target of the requested edit.
Main-text scores average valid edits without boundary filtering;
they measure adherence to the prescribed editing commands.

Position errors, appearance drift, and translation error follow
Eqs.~\ref{eq:shared-position-errors}--\ref{eq:shared-translation-error}.
On a square endpoint-aligned canvas, a diagonal-normalized distance converts
to grid L2 by multiplication by
\[
2\sqrt{H^2+W^2}/(W-1).
\]
PSNR uses dB, radius uses normalized-grid RMS-radius units, and coverage is
an area fraction.

Scale-response slopes follow
Sec.~\ref{app:metric-definitions}. Slopes require positive values at every
command. Center errors use outputs with defined centroids; their counts
are reported alongside F1 totals. Radius and coverage retain zero values
for zero-area outputs.

\subsection{Fixed-Geometry Stability}
Each valid non-background factual slot is decoded with its
factual background slot and attention held fixed.
The target is set to $\mathbf p=0$ and $s\in\{0.1,0.2,0.3\}$.
The pooled radius and coverage standard deviations follow
Eq.~\ref{eq:shared-fixed-geometry-std}.
Radius and coverage include zero-area outputs with value zero. No edited-support size or boundary
test is applied. Attention overlap is measured separately over all factual
normalized attention maps using Eq.~\ref{eq:shared-attention-overlap}.

\paragraph{Shared-background evaluation.}
We also evaluate cross-appearance stability with a shared factual
background. For each method, we sample 1,024 valid factual objects and
eight background slots from distinct videos using a fixed random seed;
the background source videos are shared across methods within each
dataset. Background slots follow the same factual mean-alpha rule above.
Each target is decoded with one background slot, whose representation
and attention are held fixed across the sampled appearances. Target
attention is retained, with $\mathbf p=0$ and
$s\in\{0.1,0.2,0.3\}$. All outputs contribute to the statistics, with
zero radius and coverage for zero-area supports.

For each background, we compute the mean, standard deviation, and
coefficient of variation (CV) across appearances, then average these
statistics over backgrounds. Table~\ref{tab:shared-background-stability}
reports $s=0.2$. GeoCo-SAVi has lower average within-background standard
deviations and CVs for radius and coverage than the corresponding
baselines at all three tested scales.

\begin{table}[htbp]
\centering
\small
\papertablecaption{Fixed-geometry stability with shared factual backgrounds at
$s=0.2$. Each statistic is computed across 1,024 appearances within a
background and averaged over eight backgrounds.}
\label{tab:shared-background-stability}
\setlength{\tabcolsep}{4pt}
\renewcommand{\arraystretch}{1.08}
\begin{tabular}{llrrrrrr}
\toprule
\rowcolor{tableheadgray}
& & \multicolumn{3}{c}{Radius} & \multicolumn{3}{c}{Coverage} \\
Dataset & Method & Mean & Std.$\downarrow$ & CV$\downarrow$ & Mean & Std.$\downarrow$ & CV$\downarrow$ \\
\midrule
Obj3D & ISA & 0.15430 & 0.04294 & 0.27831 & 0.03656 & 0.01793 & 0.49034 \\
& SlotAug & 0.14300 & 0.04978 & 0.34815 & 0.03266 & 0.02145 & 0.65703 \\
& GeoCo-SAVi & 0.15521 & \textbf{0.02029} & \textbf{0.13075} & 0.03364 & \textbf{0.00836} & \textbf{0.24846} \\
\midrule
MOVi-C & STAITUS & 0.25065 & 0.09912 & 0.49495 & 0.10652 & 0.07563 & 0.88901 \\
& GeoCo-SAVi & 0.14799 & \textbf{0.04616} & \textbf{0.32078} & 0.02830 & \textbf{0.01491} & \textbf{0.54265} \\
\bottomrule
\end{tabular}
\end{table}

\subsection{External RGB Reference Evaluation}
Obj3D supplies no official instance masks. A model-independent RGB bank
provides instance masks.
Color and temporal cues identify object regions; RGB-based refinement
separates touching instances and joins fragmented supports. A one-pixel
boundary allowance covers object-edge pixels.

The bank provides 9,265 objects across all 2,600 validation windows.
Hungarian maximum-overlap association maps the reference objects to factual
decoded slots. The associated slot's position
and full decoded centroid are compared with the reference-mask centroid.
The shared evaluation set contains 9,255 objects with defined centroids
across the compared methods and controls. Table~\ref{tab:external-centers}
gives the three distances on this population; the all-slot position
measurement is reported separately.

\begin{table}[htbp]
\centering
\small
\papertablecaption{External position evaluation in normalized-grid L2 units: Obj3D uses 9,255 RGB pseudo-reference objects from all 2,600 windows, and MOVi-C uses 1,543 visible GT tracks. Within-model distance is evaluated on the same associated objects.}
\label{tab:external-centers}
\setlength{\tabcolsep}{4pt}
\renewcommand{\arraystretch}{1.18}
\begin{tabular}{lrrr}
\toprule
\rowcolor{tableheadgray}
Dataset / method & $E_{p\rightarrow\hat c}$ & $E_{p\rightarrow c^\star}$ & $E_{\hat c\rightarrow c^\star}$ \\
\midrule
Obj3D / ISA & 0.10135 & 0.10232 & 0.04204 \\
Obj3D / SlotAug & 0.04658 & 0.05794 & 0.03584 \\
Obj3D / GeoCo-SAVi & 0.01126 & 0.04361 & 0.03910 \\
MOVi-C / STAITUS & 0.08683 & 0.34170 & 0.31277 \\
MOVi-C / GeoCo-SAVi & 0.02761 & 0.24032 & 0.23406 \\
\bottomrule
\end{tabular}
\end{table}
\begin{table}[htbp]
\centering
\small
\papertablecaption{All-slot position error in normalized-grid L2 units. Position uses original factual ownership; video editing uses a consistent decoder precision for factual reference and intervention, so valid-object counts can differ slightly.}
\label{tab:allslot-position}
\setlength{\tabcolsep}{4pt}
\renewcommand{\arraystretch}{1.18}
\begin{tabular}{lrrr}
\toprule
\rowcolor{tableheadgray}
Dataset / method & $E_{p\rightarrow\hat c}$ & Defined & Edit objects \\
\midrule
Obj3D / ISA & 0.10489 & 9424 & 9424 \\
Obj3D / SlotAug & 0.04763 & 9357 & 9357 \\
Obj3D / GeoCo-SAVi & 0.01065 & 9320 & 9320 \\
MOVi-C / STAITUS & 0.06865 & 51962 & 51971 \\
MOVi-C / GeoCo-SAVi & 0.02567 & 51600 & 51629 \\
\bottomrule
\end{tabular}
\end{table}
\subsection{Extended Numerical Results}
\label{app:fscore-intervals}
\begingroup
\setlength{\intextsep}{10pt plus 2pt minus 2pt}
\setlength{\textfloatsep}{14pt plus 2pt minus 2pt}
\setlength{\floatsep}{8pt plus 2pt minus 2pt}
Tables~\ref{tab:edit-mask-f1}--\ref{tab:obj3d-new-noop} report editing,
scale response, sample counts, fixed-geometry variation, and no-op scores.
Both scopes retain all eligible actual outcomes. F1 additionally reports
valid edits in the interior subset.
Section~\ref{app:factual-scale-calibration} reports factual scale calibration on both datasets.

F-score confidence intervals use 10,000 source-video bootstrap replicates
and the 2.5th and 97.5th percentiles, preserving each reported score's
aggregation. All observations from a video share its resampling weight;
transplant pairs receive the product of recipient and donor video weights
from the same resample. The intervals summarize evaluation-sample variability.

\begin{table}[!htb]
\centering
\small
\papertablecaption{Obj3D editing results. F1 (\%) uses valid edits in each scope; interior uses factual/ideal feasibility. Brackets give 95\% confidence intervals for the main-table F-scores. $E_{\Delta p}$ is a diagonal fraction and $D_{\rm app}$ is grid L2.}
\label{tab:edit-mask-f1}
\setlength{\tabcolsep}{4pt}
\renewcommand{\arraystretch}{1.18}
\begin{tabular}{lrrrrr}
\toprule
\rowcolor{tableheadgray}
Method / scope & $F_{\rm pos}$ & $F_{\rm scl}$ & $F_{\rm app}$ & $E_{\Delta p}$ & $D_{\rm app}$ \\
\midrule
ISA (all) & 86.35 & 62.84 & 75.73 & 0.01012 & 0.05410 \\
& [85.84, 86.83] & [62.40, 63.28] & [74.44, 76.93] & & \\
SlotAug (all) & 90.78 & 82.19 & 78.20 & 0.00770 & 0.06012 \\
& [90.34, 91.22] & [82.03, 82.34] & [76.79, 79.49] & & \\
GeoCo-SAVi (all) & 93.29 & 92.72 & 91.09 & 0.00645 & 0.02083 \\
& [93.03, 93.55] & [92.60, 92.83] & [90.43, 91.64] & & \\
ISA (interior) & 85.83 & 61.08 & 78.79 & 0.00908 & 0.04422 \\
SlotAug (interior) & 91.19 & 81.75 & 84.65 & 0.00599 & 0.03214 \\
GeoCo-SAVi (interior) & 93.40 & 93.20 & 93.08 & 0.00509 & 0.01240 \\
\bottomrule
\end{tabular}
\end{table}
\begin{table}[!htb]
\centering
\small
\papertablecaption{Obj3D five-point scale response. Slopes are means over defined curves; counts are defined/total.}
\label{tab:obj3d-new-curves}
\setlength{\tabcolsep}{4pt}
\renewcommand{\arraystretch}{1.18}
\begin{tabular}{lrrrr}
\toprule
\rowcolor{tableheadgray}
Method / scope & $\beta_r$ & $\beta_A$ & $\beta_r$ def. & $\beta_A$ def. \\
\midrule
ISA (all) & 0.09882 & -0.01512 & 9344/9424 & 9370/9424 \\
SlotAug (all) & 0.54612 & 1.04735 & 9339/9357 & 9344/9357 \\
GeoCo-SAVi (all) & 1.03473 & 1.98611 & 9303/9320 & 9310/9320 \\
ISA (interior) & 0.09388 & -0.00923 & 6928/6952 & 6939/6952 \\
SlotAug (interior) & 0.54118 & 1.05748 & 7310/7314 & 7311/7314 \\
GeoCo-SAVi (interior) & 1.03391 & 2.02246 & 7380/7386 & 7384/7386 \\
\bottomrule
\end{tabular}
\end{table}
\begin{table}[!htb]
\centering
\small
\papertablecaption{Obj3D scope-specific F1 denominators and defined/total center-error counts. F1 retains all actual outputs for valid edits, including empty outputs.}
\label{tab:obj3d-new-counts}
\setlength{\tabcolsep}{4pt}
\renewcommand{\arraystretch}{1.18}
\begin{tabular}{lrrrrr}
\toprule
\rowcolor{tableheadgray}
Method / scope & $N_{\rm pos}$ & $N_{\rm scl}$ & $N_{\rm app}$ & $E_{\Delta p}$ def. & $D_{\rm app}$ def. \\
\midrule
ISA (all) & 37144 & 36631 & 7402 & 37511/37696 & 7417/7418 \\
SlotAug (all) & 36843 & 36619 & 7691 & 37068/37428 & 7689/7691 \\
GeoCo-SAVi (all) & 36616 & 36858 & 7397 & 37221/37280 & 7397/7397 \\
ISA (interior) & 30842 & 30357 & 4564 & 30831/30842 & 4564/4564 \\
SlotAug (interior) & 31390 & 31258 & 5293 & 31328/31390 & 5291/5293 \\
GeoCo-SAVi (interior) & 30319 & 30654 & 4698 & 30319/30319 & 4698/4698 \\
\bottomrule
\end{tabular}
\end{table}
\begin{table}[!htb]
\centering
\small
\papertablecaption{Obj3D conditional output diagnostics, shown as count/eligible count (\%). Disappearance counts empty outputs for positive-area ideal targets.}
\label{tab:obj3d-new-empty}
\setlength{\tabcolsep}{4pt}
\renewcommand{\arraystretch}{1.18}
\begin{tabular}{lrrr}
\toprule
\rowcolor{tableheadgray}
Method / scope & Position & Scale & Transplant \\
\midrule
ISA (within-canvas) & 11/30842 (0.04) & 69/30357 (0.23) & 0/4564 (0.00) \\
SlotAug (within-canvas) & 62/31390 (0.20) & 8/31258 (0.03) & 2/5293 (0.04) \\
GeoCo-SAVi (within-canvas) & 0/30319 (0.00) & 4/30654 (0.01) & 0/4698 (0.00) \\
\midrule
ISA (all) & 79/37358 (0.21) & 70/37696 (0.19) & 1/7418 (0.01) \\
SlotAug (all) & 78/37064 (0.21) & 16/37428 (0.04) & 2/7691 (0.03) \\
GeoCo-SAVi (all) & 1/36805 (0.00) & 12/37280 (0.03) & 0/7397 (0.00) \\
\bottomrule
\end{tabular}
\end{table}
\begin{table}[!htb]
\centering
\small
\papertablecaption{Obj3D pooled cross-appearance fixed-geometry statistics. The radius coefficient of variation is $\mathrm{CV}_{r\mid g}=\sigma_{r\mid g}/\bar r$. At $s=0.2$, GeoCo-SAVi and ISA have similar mean radii, with lower radius standard deviation and CV for GeoCo-SAVi. $N$ counts all measurements.}
\label{tab:obj3d-new-fixed}
\setlength{\tabcolsep}{4pt}
\renewcommand{\arraystretch}{1.18}
\begin{tabular}{lrrrrrr}
\toprule
\rowcolor{tableheadgray}
Method & $s$ & $\bar r$ & $\sigma_{r\mid g}$ & $\mathrm{CV}_{r\mid g}$ & $\sigma_{A\mid g}$ & $N$ \\
\midrule
ISA & 0.1 & 0.14826 & 0.03351 & 0.22600 & 0.01239 & 9424 \\
ISA & 0.2 & 0.15308 & 0.04405 & 0.28775 & 0.01833 & 9424 \\
ISA & 0.3 & 0.15779 & 0.07107 & 0.45039 & 0.02911 & 9424 \\
SlotAug & 0.1 & 0.12094 & 0.02476 & 0.20475 & 0.00874 & 9357 \\
SlotAug & 0.2 & 0.14279 & 0.05201 & 0.36426 & 0.02265 & 9357 \\
SlotAug & 0.3 & 0.10030 & 0.10388 & 1.03572 & 0.04049 & 9357 \\
GeoCo-SAVi & 0.1 & 0.07268 & 0.01056 & 0.14530 & 0.00208 & 9320 \\
GeoCo-SAVi & 0.2 & 0.15386 & 0.02109 & 0.13709 & 0.00856 & 9320 \\
GeoCo-SAVi & 0.3 & 0.23137 & 0.03109 & 0.13439 & 0.01907 & 9320 \\
\bottomrule
\end{tabular}
\end{table}
\begin{table}[!htb]
\centering
\small
\papertablecaption{Obj3D no-op F1 (\%): unchanged recipient masks scored against the requested edit targets, on the same valid edits without boundary filtering.}
\label{tab:obj3d-new-noop}
\setlength{\tabcolsep}{4pt}
\renewcommand{\arraystretch}{1.18}
\begin{tabular}{lrrr}
\toprule
\rowcolor{tableheadgray}
Method & $F_{\rm pos}^{\rm noop}$ & $F_{\rm scl}^{\rm noop}$ & $F_{\rm app}^{\rm noop}$ \\
\midrule
ISA & 34.62 & 62.54 & 69.51 \\
SlotAug & 33.56 & 63.54 & 77.83 \\
GeoCo-SAVi & 37.02 & 63.54 & 80.04 \\
\bottomrule
\end{tabular}
\end{table}
\subsection{Object-Matched Editing and RGB Preservation}
\label{app:obj3d-edit-validation}
We use the RGB instance bank above to compare edits of the same reference
objects across ISA, SlotAug, and GeoCo-SAVi. Factual maximum-overlap
association fixes the slot for each reference object. The commands are
the four translations and four nonidentity scales in
Sec.~\ref{app:edit-mask-f1}. All results in this subsection average edits
within each source video and then equally across the 200 videos.

\paragraph{Common-object editing.}
The common pool requires valid reference edits and valid edits for all
three associated decoded supports. It is fixed from factual supports and
the prescribed transformations. Editing F1 follows
Eq.~\ref{eq:shared-editing-f1} and the coordinate maps in
Sec.~\ref{app:edit-mask-f1}. Table~\ref{tab:obj3d-common-edit}
reports 36,210 translation and 35,986 scale edits, each spanning
9,216 reference objects and all 2,600 validation windows.

\begin{table}[htbp]
\centering
\small
\papertablecaption{Obj3D command adherence on common reference object--command pairs. F1 is in percent; each column uses the same pairs for all methods.}
\label{tab:obj3d-common-edit}
\setlength{\tabcolsep}{14pt}
\renewcommand{\arraystretch}{1.18}
\begin{tabular}{lrr}
\toprule
\rowcolor{tableheadgray}
Method & $F_{\rm pos}\uparrow$ & $F_{\rm scl}\uparrow$ \\
\midrule
ISA & 88.65 & 63.85 \\
SlotAug & 91.98 & 82.62 \\
GeoCo-SAVi & \textbf{93.77} & \textbf{92.93} \\
\bottomrule
\end{tabular}
\end{table}

\paragraph{Independent RGB targets.}
An RGB visible-layer oracle provides a common target image for each
reference object and command. We extract the object using its RGB-derived
matte, fill its source region by local background inpainting, and composite
the transformed premultiplied RGB layer with bilinear sampling. Scaling
is centered at the reference-mask centroid. These target images are
constructed from the input RGB and reference masks independently of the
model predictions. We evaluate all eight commands for each of 9,240 valid
reference objects across the 2,600 windows, giving 73,920 edits per method.
Table~\ref{tab:obj3d-oracle-rgb} reports PSNR on the full $64\times64$ image
and on the union of the source and transformed reference supports,
expanded by two four-connected pixel steps.

For transplantation, each valid reference object is paired with a donor
from the next video in cyclic order, giving 9,240 common pairs across
200 recipient videos. The visible donor layer is aligned to the recipient
reference-mask centroid and RMS radius, then composited over the inpainted
recipient image. The associated donor appearance state replaces the
recipient appearance state, with geometry and other slots fixed.
All pairs are retained without boundary filtering. The editing region
includes the recipient and transformed donor supports and alpha footprints,
expanded as above. Scores are averaged within recipient videos and then
equally over videos.

\begin{table}[htbp]
\centering
\small
\papertablecaption{Obj3D PSNR against independent RGB oracle edits (dB). Position and scale each contain 36,960 edits; transplant contains 9,240 pairs. Target images are identical across methods.}
\label{tab:obj3d-oracle-rgb}
\setlength{\tabcolsep}{6pt}
\renewcommand{\arraystretch}{1.18}
\begin{tabular}{lrrrrrr}
\toprule
& \multicolumn{3}{c}{Full image} & \multicolumn{3}{c}{Editing region} \\
\cmidrule(lr){2-4}\cmidrule(lr){5-7}
\rowcolor{tableheadgray}
Method & Position$\uparrow$ & Scale$\uparrow$ & Transplant$\uparrow$ & Position$\uparrow$ & Scale$\uparrow$ & Transplant$\uparrow$ \\
\midrule
ISA & 37.322 & 35.368 & 34.606 & 27.675 & 24.349 & 23.327 \\
SlotAug & 38.654 & 38.528 & 34.951 & 29.720 & 28.924 & 24.047 \\
GeoCo-SAVi & \textbf{39.914} & \textbf{40.176} & \textbf{35.728} & \textbf{30.883} & \textbf{30.889} & \textbf{24.428} \\
\bottomrule
\end{tabular}
\end{table}

\paragraph{Non-target RGB preservation.}
On the position and scale object--command pairs, we measure RGB changes outside the
editing region defined above. This region is shared across methods.
For RGB intensities in $[0,1]$, we report the mean absolute change from
the factual reconstruction and the percentage of pixels whose
channel-averaged absolute change exceeds $2/255$.
GeoCo-SAVi has lower values for both measures
(Table~\ref{tab:obj3d-rgb-preservation}). Paired 95\% confidence intervals
from 5,000 source-video bootstrap replicates exclude zero for the
reported improvements over both baselines in
Tables~\ref{tab:obj3d-common-edit}--\ref{tab:obj3d-rgb-preservation}.

\begin{table}[htbp]
\centering
\small
\papertablecaption{Obj3D RGB changes outside the source/target editing region. MAE is multiplied by $10^3$; changed-pixel rates are percentages.}
\label{tab:obj3d-rgb-preservation}
\setlength{\tabcolsep}{8pt}
\renewcommand{\arraystretch}{1.18}
\begin{tabular}{lrrrr}
\toprule
& \multicolumn{2}{c}{RGB MAE $\times10^3$} & \multicolumn{2}{c}{Changed pixels (\%)} \\
\cmidrule(lr){2-3}\cmidrule(lr){4-5}
\rowcolor{tableheadgray}
Method & Position$\downarrow$ & Scale$\downarrow$ & Position$\downarrow$ & Scale$\downarrow$ \\
\midrule
ISA & 0.8902 & 0.7724 & 2.1480 & 2.0424 \\
SlotAug & 0.8296 & 0.7756 & 2.1646 & 1.9778 \\
GeoCo-SAVi & \textbf{0.4939} & \textbf{0.6873} & \textbf{1.2360} & \textbf{1.6966} \\
\bottomrule
\end{tabular}
\end{table}
\endgroup
\subsection{Component Comparisons}
\label{app:component-comparisons}
The seven controls in Table~\ref{tab:ablation} are evaluated at 70k updates.
The conventional decoder replaces scale-steered spatial convolutions with
ordinary unit-grid convolutions, retaining the relative-coordinate input,
feature widths, upsampling stages, and output heads.

The loss controls set the corresponding objective weight to zero:
$\mathcal L_{\rm ov}$, $\mathcal L_{\rm pos}$,
$\mathcal L_{\rm geo}$, or $\mathcal L_{\rm tail}$.
Removing $\mathcal L_{\rm geo}$ disables its center, radius, and compactness
terms; the counterfactual branch remains available for the tail
penalty. The readout control disables the terminal readout and its gate
regularizer. The selector control disables the alpha selector.
For active losses, factual calibration and overlap regularization begin
at 15k, and counterfactual calibration begins at 40k with the ramps
specified above. The counterfactual center penalty uses coordinate Huber
before 50k and the scale-normalized form thereafter. The tail penalty,
when active, has weight $0.001$ and threshold $0.25$ from 60k without a
ramp. These settings complement the component-specific comparisons in
the main text.

PSNR averages four inference
initializations. Editing uses the same 2,600 validation windows and
valid-object criterion as the main evaluation.
Reference-center errors use the fixed RGB reference bank described above;
fixed-geometry spreads pool objects at $p=0,s=0.2$.

\section{MOVi-C Protocol and Extended Diagnostics}
\label{app:movic}

\subsection{Common Video Protocol and Reference Implementations}

All three methods are trained on MOVi-C~\citep{kubric} for 100k optimizer
updates and evaluated on the fixed official validation population of 250
videos, each containing 24 frames.  RGB reconstruction uses $64\times64$
frames; instance metrics use the official GT resolution after alpha resizing,
and editing protocols state their measurement resolution below. Native RGB is
first resized bilinearly to $64\times64$, then bicubically to $336\times336$
for the frozen visual encoder
\texttt{vit\_small\_\allowbreak patch14\_\allowbreak dinov2.lvd142m}~\citep{dinov2}; its block-12 384-dimensional
tokens are projected to 128
dimensions.  The reference implementations are matched in data, RGB target,
encoder/decoder capacity, batch size, 100k update budget, and total sample
exposure.  Loss, optimizer, and curriculum hyperparameters
otherwise use the published values, subject only to the adaptations disclosed
below.

\subsection{GeoCo-SAVi MOVi-C Architecture, Optimization, and Curriculum}
\label{app:movic-curriculum}

\paragraph{Slot core and renderer.}
GeoCo-SAVi uses 11 slots with a 64-dimensional appearance state and explicit
state $(\mathbf a,\mathbf p,s)$.  It uses 64-dimensional query/key/value
projections, a 64-dimensional Slot-Attention GRU~\citep{slotattention,gru}, a 256-dimensional update
MLP, and a 128-dimensional grid encoding.  The first frame of every
training clip uses three Slot-Attention iterations; a
carry-forward frame uses two.  Positions are initialized iid-uniformly,
$s$ is clamped to $[.001,2]$, factor 5 normalizes the object-relative
coordinate field, and the
Spatially Equivariant Decoder has width 64.  It forms a $32\times32$ canonical
field, applies one scale-steered $5\times5$ refinement, bilinearly upsamples,
then applies four scale-steered $3\times3$ refinements at $64\times64$.
The RGB head is $3\times3$ with sigmoid output and the alpha head is $1\times1$. The reported renderer uses only the convolutional path described above. The regular steered kernel is $3\times3$ and its learned gain is
bounded by 2.

\paragraph{Optimization and reference-scale continuation.}
We use AdamW with base learning rate $4\times10^{-4}$, zero weight decay,
BF16 AMP, global gradient-norm clipping at $0.05$, physical/effective batch
size 16.  Only the decoder is compiled.  The core rate
warms up linearly for 10k updates and then follows cosine decay to zero at
100k.  The training loader has eight workers, no subsampling, and uses video
identity as its source key; the fixed position seed for evaluation is
20260723.  Let $s_c=0.851256531$, $s_f=0.2$, and
$\ell_t=(1-\beta_t)\log s_c+\beta_t\log s_f$.  The global numerical reference of the
scale-steered operator follows
\begin{equation}
s_{\mathrm{ref}}(t)=
\begin{cases}
 s_c, & t<2000,\\
 e^{\ell_t},
    & 2000\le t<10000,\\
 s_f, & t\ge10000,
\end{cases}
\label{eq:movic-sref}
\end{equation}
where $\beta_t=\min((t-1999)/8000,1)$.
This reference is a global numerical continuation for the convolution
operator, not an object-size prior or a replacement for the predicted $s_i$.

\paragraph{Loss activation and temporal curriculum.}
Position and normalized-attention overlap activate at 15k and reach full
weight over 2k updates.  We report the rounded coefficients
$\lambda_{\mathrm{pos}}=0.04$ and $\lambda_{\mathrm{ov}}=0.018$.
Position uses coordinate Huber loss
with threshold $0.05$ on the detached-command gamma-2 OneFG centroid; only
loose factual foreground supports with OneFG coverage at most 0.2 pass the
detached validity filter, and no separate valid-background gate is applied.
$\mathcal L_{\mathrm{geo}}$ is kept at zero until 48k, ramps linearly to full
weight by 50k, and then uses
$\lambda_{\mathrm{geo}}=0.004$, unit center/radius terms, and a 0.25
compactness term, all with Huber threshold $0.05$.  Its gamma-2 OneFG pairs
use factual quality gates and the receiver-center/radius, donor-compactness
semantics described in Sec.~\ref{sec:method}.  It is evaluated at time index
0 for two-frame clips and at indices 0 and 2 for four-frame clips.

\begin{table}[htbp]
\centering
\small
\papertablecaption{Reported GeoCo-SAVi MOVi-C curriculum.  All stages share one optimizer,
scheduler, and AMP-scaler trajectory.}
\label{tab:movic-curriculum}
\renewcommand{\arraystretch}{1.18}
\setlength{\tabcolsep}{4pt}
\begin{tabularx}{\textwidth}{cc>{\raggedright\arraybackslash}X>{\raggedright\arraybackslash}X}
\toprule
Updates & Frames / clip & Active terms & Purpose \\
\midrule
0--15k & 2 & RGB reconstruction; $s_{\mathrm{ref}}$ continuation
    & Stable slot formation with carry-forward \\
15--17k & 2 & $\mathcal L_{\mathrm{pos}}$, $\mathcal L_{\mathrm{ov}}$ ramp
    & Factual geometry and allocation calibration \\
17--48k & 2 & $\mathcal L_{\mathrm{pos}}$, $\mathcal L_{\mathrm{ov}}$
    & Core-only stabilization \\
48--50k & 2 & $\mathcal L_{\mathrm{geo}}$ ramp
    & Counterfactual geometry activation \\
50--70k & 2 & Reconstruction, factual and counterfactual objectives
    & Appearance-transplantation calibration \\
70--72k & 4 & Zero-residual STATM local LR warm-up
    & Temporal-module activation \\
72--100k & 4 & Full STATM local schedule
    & Joint temporal refinement \\
\bottomrule
\end{tabularx}
\end{table}

The residual STATM initializer has width 128, four heads, a 512-dimensional
MLP, and four history states.  It detaches history and uses zero dropout.  Its
output projection is zero-initialized, so the transition at 70k starts from
exact carry-forward.  Its parameter group belongs to the same AdamW instance
from update zero but receives no gradient before 70k; it then warms up locally
through 72k to $0.25$ times the base rate, then follows an independent cosine
decay to zero at 100k.
The reported MOVi-C configuration combines the spatially equivariant core
with the temporal initializer; the optional terminal appearance readout,
selective alpha ownership, and tail penalty are inactive.

The STAITUS implementation uses the same-capacity SAVi encoder/decoder core, a
temporal appearance predictor, and an active-set gate, with the published losses. The gate is a part of reconstruction
and the losses, with a 15k all-active warm-up followed
by a 15k gate ramp. STATM-SAVi uses
the same-capacity SAVi encoder/decoder core, RGB MSE, and the published
corresponding-slot spatial and temporal Transformer in its strongest CS/T+S
form.  These are same-protocol
implementations for controlled comparison.

\subsection{Factual Reconstruction and Instance Metrics}

Full-softmax alpha is bilinearly resized to the official GT
resolution before taking a slotwise argmax.  FG-ARI, global ARI, Hungarian
mIoU, and mean best overlap (mBO) are averaged with equal weight per video;
global ARI retains the GT background.  RGB values lie in $[0,1]$; PSNR is
$10\log_{10}(1/\mathrm{MSE})$ per 24-frame video, with MSE averaged over all
RGB entries in that video and the 250 video-level scores weighted equally.
Neither the factual metrics nor their denominators use
quality filters, OneFG, explicit geometry, or edited states.  Official visible
instance segmentations are used only for these metrics, never as model input.

For factual attention overlap, the final Slot Attention allocation
maps supply $w_i$ to Eq.~\ref{eq:shared-attention-overlap}, and the metric is
averaged over every frame of all 250 validation videos.
STAITUS is evaluated only on its factual active slots, consistent with the
state that contributes to its reconstruction.  This convention is favorable
to STAITUS because closed slots are excluded, whereas GeoCo-SAVi's all-slot
average also includes diffuse empty-slot maps.  This row is a factual
allocation diagnostic, not an alpha-overlap or editable-geometry metric.

\subsection{All-Slot Editing and External Position Evaluation}
Editing uses all 24 frames of all 250 validation videos and the
valid non-background-object rule in Sec.~\ref{app:obj3d-evaluation}.
Each factual slot is edited independently using the four translations and
five-point scale grid. Both factual edit references and interventions use
the same decoder precision. STAITUS preserves its active-slot gating.
The all-object and ideal-boundary-feasible scopes use the same definitions
as Obj3D, with no cross-method slot association.
For transplantation, donor windows are taken from a different video in
cyclic order, with the same foreground-slot rank. Editing F-scores quantify
command adherence using Eq.~\ref{eq:shared-editing-f1}.

External position errors use all 1,543 area-qualified GT tracks.
The reference frame maximizes visible GT area, with earliest-frame
tie-breaking. Tracks require at least 20 reference pixels; a one-pixel
eroded core, or the original mask if erosion empties it, is used for
factual Hungarian association. Each association is frozen before measurement.
The visible GT centroid supplies $\mathbf c^\star$ in
Table~\ref{tab:external-centers}. These errors describe both latent position
and decoded-mask localization relative to visible objects.
The per-frame editing experiment uses every eligible slot independently
of this external association.

\subsection{Extended Numerical Results}
Tables~\ref{tab:movic-new-edits}--\ref{tab:movic-new-noop} give the
per-frame all-slot measurements and their denominators.
Section~\ref{app:factual-scale-calibration} reports factual scale calibration on both datasets.
Boundary feasibility is computed from factual masks and their ideal transforms;
the resulting evaluation counts are reported for all methods.
\begingroup
\setlength{\intextsep}{10pt plus 2pt minus 2pt}
\setlength{\textfloatsep}{14pt plus 2pt minus 2pt}
\setlength{\floatsep}{8pt plus 2pt minus 2pt}

\begin{table}[!htb]
\centering
\small
\papertablecaption{MOVi-C editing results. F1 (\%) uses valid edits in each scope; interior uses factual/ideal feasibility. $E_{\Delta p}$ is a diagonal fraction and $D_{\rm app}$ is grid L2.}
\label{tab:movic-new-edits}
\setlength{\tabcolsep}{4pt}
\renewcommand{\arraystretch}{1.18}
\begin{tabular}{lrrrrr}
\toprule
\rowcolor{tableheadgray}
Method / scope & $F_{\rm pos}$ & $F_{\rm scl}$ & $F_{\rm app}$ & $E_{\Delta p}$ & $D_{\rm app}$ \\
\midrule
STAITUS (all) & 75.96 & 76.86 & 60.37 & 0.02442 & 0.08065 \\
GeoCo-SAVi (all) & 83.01 & 83.11 & 72.04 & 0.01045 & 0.04083 \\
STAITUS (interior) & 74.25 & 76.28 & 58.33 & 0.01745 & 0.05848 \\
GeoCo-SAVi (interior) & 87.86 & 85.85 & 76.02 & 0.00667 & 0.03335 \\
\bottomrule
\end{tabular}
\end{table}
\begin{table}[!htb]
\centering
\small
\papertablecaption{MOVi-C five-point scale response. Slopes are means over defined curves; counts are defined/total.}
\label{tab:movic-new-curves}
\setlength{\tabcolsep}{4pt}
\renewcommand{\arraystretch}{1.18}
\begin{tabular}{lrrrr}
\toprule
\rowcolor{tableheadgray}
Method / scope & $\beta_r$ & $\beta_A$ & $\beta_r$ def. & $\beta_A$ def. \\
\midrule
STAITUS (all) & 0.70923 & 1.12247 & 50738/51971 & 50964/51971 \\
GeoCo-SAVi (all) & 0.85099 & 1.37988 & 50631/51629 & 50916/51629 \\
STAITUS (interior) & 0.73223 & 1.21219 & 12042/12159 & 12056/12159 \\
GeoCo-SAVi (interior) & 0.88920 & 1.61780 & 16796/16866 & 16802/16866 \\
\bottomrule
\end{tabular}
\end{table}
\begin{table}[!htb]
\centering
\small
\papertablecaption{MOVi-C scope-specific F1 denominators and defined/total center-error counts. F1 retains all actual outputs for valid edits, including empty outputs.}
\label{tab:movic-new-counts}
\setlength{\tabcolsep}{4pt}
\renewcommand{\arraystretch}{1.18}
\begin{tabular}{lrrrrr}
\toprule
\rowcolor{tableheadgray}
Method / scope & $N_{\rm pos}$ & $N_{\rm scl}$ & $N_{\rm app}$ & $E_{\Delta p}$ def. & $D_{\rm app}$ def. \\
\midrule
STAITUS (all) & 204961 & 201269 & 44284 & 205891/207884 & 42904/45177 \\
GeoCo-SAVi (all) & 198500 & 189003 & 41954 & 198106/206516 & 43175/43950 \\
STAITUS (interior) & 72936 & 68173 & 7181 & 72030/72936 & 6764/7181 \\
GeoCo-SAVi (interior) & 113245 & 103197 & 12059 & 112427/113245 & 11839/12059 \\
\bottomrule
\end{tabular}
\end{table}
\begin{table}[!htb]
\centering
\small
\papertablecaption{MOVi-C conditional output diagnostics, shown as count/eligible count (\%). Disappearance counts empty outputs for positive-area ideal targets.}
\label{tab:movic-new-empty}
\setlength{\tabcolsep}{4pt}
\renewcommand{\arraystretch}{1.18}
\begin{tabular}{lrrr}
\toprule
\rowcolor{tableheadgray}
Method / scope & Position & Scale & Transplant \\
\midrule
STAITUS (within-canvas) & 906/72936 (1.24) & 792/68173 (1.16) & 417/7181 (5.81) \\
GeoCo-SAVi (within-canvas) & 818/113245 (0.72) & 559/103197 (0.54) & 220/12059 (1.82) \\
\midrule
STAITUS (all) & 1494/206317 (0.72) & 1227/207586 (0.59) & 2263/45098 (5.02) \\
GeoCo-SAVi (all) & 4172/201674 (2.07) & 901/206414 (0.44) & 760/43820 (1.73) \\
\bottomrule
\end{tabular}
\end{table}
\begin{table}[!htb]
\centering
\small
\papertablecaption{MOVi-C pooled cross-appearance fixed-geometry statistics. Radius CV is $\sigma_{r\mid g}/\bar r$. GeoCo-SAVi has lower radius standard deviation and CV at all three scales. $N$ counts all measurements.}
\label{tab:movic-new-fixed}
\setlength{\tabcolsep}{4pt}
\renewcommand{\arraystretch}{1.18}
\begin{tabular}{lrrrrrr}
\toprule
\rowcolor{tableheadgray}
Method & $s$ & $\bar r$ & $\sigma_{r\mid g}$ & $\mathrm{CV}_{r\mid g}$ & $\sigma_{A\mid g}$ & $N$ \\
\midrule
STAITUS & 0.1 & 0.17068 & 0.16820 & 0.98550 & 0.11655 & 51962 \\
STAITUS & 0.2 & 0.24798 & 0.17560 & 0.70813 & 0.14393 & 51962 \\
STAITUS & 0.3 & 0.31203 & 0.18789 & 0.60217 & 0.17095 & 51962 \\
GeoCo-SAVi & 0.1 & 0.08325 & 0.02485 & 0.29851 & 0.00625 & 51600 \\
GeoCo-SAVi & 0.2 & 0.15475 & 0.05029 & 0.32496 & 0.01908 & 51600 \\
GeoCo-SAVi & 0.3 & 0.22067 & 0.08241 & 0.37347 & 0.04146 & 51600 \\
\bottomrule
\end{tabular}
\end{table}
\begin{table}[!htb]
\centering
\small
\papertablecaption{MOVi-C no-op F1 (\%): unchanged recipient masks scored against the requested edit targets, on the same valid edits without boundary filtering.}
\label{tab:movic-new-noop}
\setlength{\tabcolsep}{4pt}
\renewcommand{\arraystretch}{1.18}
\begin{tabular}{lrrr}
\toprule
\rowcolor{tableheadgray}
Method & $F_{\rm pos}^{\rm noop}$ & $F_{\rm scl}^{\rm noop}$ & $F_{\rm app}^{\rm noop}$ \\
\midrule
STAITUS & 48.06 & 64.07 & 59.21 \\
GeoCo-SAVi & 29.48 & 65.69 & 60.18 \\
\bottomrule
\end{tabular}
\end{table}
\subsection{Whole-Video Editing}
\label{app:video-tube-f1}
A separate experiment follows one deterministically selected GT track per
video. Hungarian association at its maximum-visible-area frame fixes a
slot index, which is then edited throughout all 24 recurrent states.
The selected object must be valid at the reference frame, with the
support-area criterion scaled to the $128\times128$ measurement resolution.
Both methods have 247 eligible videos; 238 have the same eligible recipient
and donor video pair across methods. Table~\ref{tab:video-tube-f1} reports
this common set. The donor is the next eligible video in cyclic order,
and its appearance stream $\mathbf a_t$ is transplanted into the
recipient's geometry stream $(\mathbf p_t,s_t)$.

The four translation commands and four nonidentity scale commands are
held constant through the video. Targets are ideal transformations of
each frame's factual support; transplantation transforms the donor's
support using the corresponding frame's geometry. GT anchors the slot association at the reference frame. All 24 frames contribute, including empty edited masks and
occluded frames. The tube score follows Eq.~\ref{eq:shared-tube-f1} with
$T=24$. Valid-edit target area is measured over the full tube using the
same resolution-scaled criterion. Scores are averaged equally over videos and then over commands within
each operation. The no-op scores the unchanged recipient sequence against
the requested edit target on the same population.

\begin{table}[htbp]
\centering
\small
\papertablecaption{Whole-video tube F1 (\%) on 238 common recipient--donor video pairs, with all 24 frames retained. Brackets give 95\% confidence intervals for the main-table F-scores using the resampling procedure in Sec.~\ref{app:fscore-intervals}.}
\label{tab:video-tube-f1}
\setlength{\tabcolsep}{4pt}
\renewcommand{\arraystretch}{1.18}
\begin{tabular}{lrrr}
\toprule
\rowcolor{tableheadgray}
Method & $F_{\rm pos}$ & $F_{\rm scl}$ & $F_{\rm app}$ \\
\midrule
STAITUS & 75.34 & 72.14 & 45.15 \\
& [73.46, 77.12] & [70.14, 74.01] & [39.71, 50.41] \\
GeoCo-SAVi & 86.51 & 84.93 & 74.01 \\
& [85.68, 87.31] & [84.03, 85.82] & [70.48, 77.26] \\
STAITUS no-op & 54.60 & 61.77 & 47.20 \\
GeoCo-SAVi no-op & 40.99 & 64.60 & 58.39 \\
\bottomrule
\end{tabular}
\end{table}

\subsection{Object-Matched Editing and RGB Preservation}
\label{app:movic-edit-validation}
We compare edits of common GT objects using the reference-frame
associations in Sec.~\ref{app:movic}. The four translations and four
nonidentity scales follow Sec.~\ref{app:edit-mask-f1}.
Each measure is averaged within source videos and then equally over videos.
Table~\ref{tab:movic-edit-validation} reports the three evaluations below.

\paragraph{Common-object editing.}
The common pool requires valid reference edits and valid edits for both
associated decoded supports, as in Sec.~\ref{app:obj3d-edit-validation}.
The support-area criterion is scaled to the $128\times128$ measurement
resolution. Without boundary filtering, the pool contains 5,801 translation
and 5,591 scale edits, each spanning 1,496 GT objects and all 250 videos.
Editing F1 follows Eq.~\ref{eq:shared-editing-f1}.

\paragraph{Independent RGB targets.}
We construct visible-layer RGB oracle targets using official GT masks
and the compositing procedure in Sec.~\ref{app:obj3d-edit-validation}.
GT masks are area-resized to $64\times64$ mattes, and scaling is centered
at the GT centroid. The 1,541 valid source objects across 250 videos give
6,164 edits per command family. Both methods use identical targets.
PSNR is measured over the full image and the source/target support union
expanded by two four-connected pixel steps. Transplantation uses the
reference-frame associations above and the donor-pairing and visible-layer
procedure in Sec.~\ref{app:obj3d-edit-validation}, retaining all 1,541 valid
reference pairs without boundary filtering.

\paragraph{Non-target RGB preservation.}
We evaluate 178 common factual associations from 125 videos, with
reference-core recall and purity at least 0.5 for both methods.
All eight position and scale commands are retained, giving 712 edits per family.
For these commands, the measurement region excludes the factual/ideal GT support union,
nearest-resized to $64\times64$ and dilated with a $5\times5$ square.
Applying the same association criterion to both members of each transplant
pair retains 14 pairs from 14 recipient videos; changes are measured
outside the transplant editing region defined above.
We report RGB MAE and the percentage of pixels whose channel-averaged
absolute change exceeds $2/255$, as in
Sec.~\ref{app:obj3d-edit-validation}. Paired 95\% confidence intervals
from 5,000 source-video bootstrap replicates, with fixed donor assignments
for transplantation, exclude zero for all
reported improvements in Table~\ref{tab:movic-edit-validation}.

\begin{table}[!htb]
\centering
\small
\papertablecaption{MOVi-C object-matched editing and RGB preservation. F1 and changed-pixel rates are percentages; oracle PSNR is in dB. Each block uses common object--command pairs across methods.}
\label{tab:movic-edit-validation}
\setlength{\tabcolsep}{8pt}
\renewcommand{\arraystretch}{1.05}
\setlength{\tabcolsep}{3pt}
\tablesubheading{Editing evaluation}
\begin{tabular*}{\linewidth}{@{\extracolsep{\fill}}lrrrrrrrr@{}}
\toprule
& \multicolumn{2}{c}{F1 (\%)$\uparrow$} & \multicolumn{3}{c}{Oracle PSNR: full image$\uparrow$} & \multicolumn{3}{c}{Oracle PSNR: editing region$\uparrow$} \\
\cmidrule(lr){2-3}\cmidrule(lr){4-6}\cmidrule(lr){7-9}
Method & $F_{\rm pos}$ & $F_{\rm scl}$ & Position & Scale & Transplant & Position & Scale & Transplant \\
\midrule
STAITUS & 77.03 & 73.50 & 20.444 & 20.587 & 19.000 & 17.202 & 17.796 & 15.234 \\
GeoCo-SAVi & \textbf{85.37} & \textbf{84.65} & \textbf{21.929} & \textbf{21.983} & \textbf{19.932} & \textbf{18.607} & \textbf{19.145} & \textbf{15.887} \\
\bottomrule
\end{tabular*}
\par\medskip
\setlength{\tabcolsep}{6pt}
\tablesubheading{Non-target RGB preservation}
\begin{tabular*}{\linewidth}{@{\extracolsep{\fill}}lrrrrrr@{}}
\toprule
& \multicolumn{3}{c}{RGB MAE $\times10^3\downarrow$} & \multicolumn{3}{c}{Changed pixels (\%)$\downarrow$} \\
\cmidrule(lr){2-4}\cmidrule(lr){5-7}
Method & Position & Scale & Transplant & Position & Scale & Transplant \\
\midrule
STAITUS & 1.9314 & 2.5208 & 11.1853 & 4.3568 & 6.0451 & 20.4504 \\
GeoCo-SAVi & \textbf{0.5603} & \textbf{0.7574} & \textbf{0.8834} & \textbf{1.2508} & \textbf{1.8191} & \textbf{2.2499} \\
\bottomrule
\end{tabular*}
\end{table}

\endgroup

\FloatBarrier
\section{Factual Scale Calibration}
\label{app:factual-scale-calibration}

We assess the relationship between inferred scale $s$ and factual decoded
hard-support RMS radius $r$ over valid non-background objects on the native
canvas. Each frame--slot observation is counted once, without boundary
filtering or donor--recipient weighting. We fit $r=as+b$ by least squares
and $\log r=\log s+b$ with slope fixed to one. For each fit, we report
$R^2=1-\sum_i(y_i-\widehat y_i)^2/\sum_i(y_i-\overline y)^2$ in its
respective response space ($r$ or $\log r$).
The standard deviation of $\log(r/s)$ summarizes variation in the
radius-to-scale ratio. These statistics describe factual scale calibration;
editing responses are evaluated separately in the preceding sections.

\begin{table}[htbp]
\centering
\small
\papertablecaption{Factual scale calibration on Obj3D and MOVi-C. $N$ counts valid factual frame--slot observations. The two $R^2$ columns use their respective response spaces.}
\label{tab:factual-scale-calibration}
\setlength{\tabcolsep}{5pt}
\renewcommand{\arraystretch}{1.18}
\begin{tabular}{llrrrr}
\toprule
\rowcolor{tableheadgray}
Dataset & Method & $N$ & Linear $R^2\uparrow$ & Unit-slope log $R^2\uparrow$ & $\mathrm{Std}[\log(r/s)]\downarrow$ \\
\midrule
Obj3D & ISA & 9424 & 0.6637 & 0.5078 & 0.2093 \\
& SlotAug & 9357 & \textbf{0.8129} & 0.6399 & 0.1551 \\
& GeoCo-SAVi & 9320 & 0.7924 & \textbf{0.6873} & \textbf{0.1394} \\
\midrule
MOVi-C & STAITUS & 51971 & 0.6444 & 0.6346 & 0.3164 \\
& GeoCo-SAVi & 51629 & \textbf{0.8589} & \textbf{0.8346} & \textbf{0.2041} \\
\bottomrule
\end{tabular}
\end{table}

\section{Finite-Grid Covariance Diagnostics}
\label{app:finite-grid-diagnostics}
We evaluate the per-slot raw-logit covariance in
Proposition~\ref{prop:raw-logit-equivariance} using native-grid interior
measurements and an extended-canvas diagnostic. Each object receives four
translations and four nonidentity scales. With edited logits $\ell_e$,
bilinearly inverse-warped reference logits $\ell_t$, and measurement domain
$D$, the relative error is
\[
 \epsilon_\ell^{\rm rel}=
 \frac{\sqrt{|D|^{-1}\sum_{\mathbf u\in D}
              (\ell_e(\mathbf u)-\ell_t(\mathbf u))^2}}
 {\max\{\sqrt{|D|^{-1}\sum_{\mathbf u\in D}\ell_t(\mathbf u)^2},10^{-8}\}}.
\]
Statistics are averaged over interventions with at least 16 measurement
pixels, using FP32 with TF32 disabled. Both diagnostics use the same
frozen slots and model weights.

This diagnostic uses a separate factual selection. Obj3D selects one
highest-confidence qualified slot per window, giving 2,581 objects from
2,592 evaluated windows and 10,324 interventions per edit family.
Qualification requires hard area between 20 pixels and 60\% of the canvas,
alpha peak above 0.3, scale below 0.3, and an available background slot.
Normalized attention has effective area between 2 and 75\% of the tokens,
maximum cosine with another slot below 0.5, and a 2.5-standard-deviation
extent around the explicit position within a two-pixel canvas margin.
Among qualified slots, confidence is one minus the maximum attention cosine.
MOVi-C uses 178 common GT-associated objects with reference-core recall
and purity at least 0.5, an available background, and a two-pixel
reference bounding-box margin. These populations characterize numerical
covariance on localized factual objects; all-slot editing results above
use their own stated populations.

\paragraph{Native output, receptive-field-valid domain.}
We propagate validity through each bilinear upsampling and scale-steered
convolution, including the alpha head. The measurement domain contains
pixels for which the edited field and all bilinear contributors to the
warped reference have complete in-canvas computational support
(validity at least $1-10^{-6}$). Native rendered fields are unchanged.
Domain membership depends on geometry and operator support.

\paragraph{Extended computational canvas, original measurement domain.}
We extend the per-layer coordinate domain from $[-1,1]^2$ to $[-3,3]^2$,
preserving lattice spacing, weights, coordinate functions, and kernel
offsets. A layer with native side length $H$ thus uses $3(H-1)+1$ samples.
Factual and edited fields are computed on this extended canvas and
measured at original-grid pixels whose inverse-transformed coordinates
lie in the original factual canvas. This computational diagnostic uses
no receptive-field restriction on the measurement pixels.

\begin{table}[!htb]
\centering
\small
\papertablecaption{Finite-grid covariance and final-output diagnostics.
Raw-logit columns report relative RMSE in percent, with the number of
interventions having at least 16 measurement pixels in parentheses.
Native uses the receptive-field-valid domain; extended uses the extended
computational canvas. Final-output columns use the extended canvas:
probability RMSE averages all 10,324 interventions per family on Obj3D
and 712 on MOVi-C; F1 and its lower bound are means in percent over the
$N_{\mathrm{F1}}$ target-membership comparisons with $|A|+|I|>0$.}
\label{tab:finite-grid-errors}
\label{tab:final-output-diagnostics}
\setlength{\tabcolsep}{3pt}
\renewcommand{\arraystretch}{1.18}
\begin{tabular}{llrrrrrr}
\toprule
& & \multicolumn{2}{c}{Raw-logit RMSE (\%), $N$} & \multicolumn{4}{c}{Extended-canvas final outputs} \\
\cmidrule(lr){3-4}\cmidrule(lr){5-8}
\rowcolor{tableheadgray}
Dataset & Edit & Native & Extended & Prob.\ RMSE & F1 & F1 bound & $N_{\mathrm{F1}}$ \\
\midrule
Obj3D & Translation & 0.0325 (10324) & 0.0199 (10324) & 0.00422 & 95.87 & 95.56 & 10324 \\
& Scaling & 2.0376 (10324) & 0.9230 (10324) & 0.01049 & 95.69 & 95.17 & 10324 \\
\midrule
MOVi-C & Translation & 0.0515 (704) & 0.0446 (712) & 0.00042 & 99.86 & 99.79 & 711 \\
& Scaling & 2.4478 (689) & 1.4237 (712) & 0.01136 & 96.95 & 95.16 & 712 \\
\bottomrule
\end{tabular}
\end{table}

Table~\ref{tab:finite-grid-errors} reports both raw-logit diagnostics on Obj3D and
MOVi-C. Translation errors are below 0.06\% in both settings; scaling
errors range from 2.04--2.45\% on the native receptive-field-valid domain
and 0.92--1.42\% with the extended computational canvas. These pixelwise
measurement domains are separate from the object-level boundary scopes
used for editing metrics.

\subsection{Final Probabilities and Target Membership}
\label{app:final-output-bounds}
Let $\ell_j^f$ and $\ell_j^e$ denote factual and edited final mask logits,
and let $P$ bilinearly sample a factual field at the inverse edit coordinate.
For target slot $i$, define $q_i=P\ell_i^f$ and
$\varepsilon_i=\ell_i^e-q_i$. The reference scene has target logit $q_i$
and edited competing logits $\ell_j^e$, $j\ne i$.
Writing $b_e=\logsumexp_{j\ne i}\ell_j^e$ and
$m_e=\max_{j\ne i}\ell_j^e$, its target probability is
$\bar\alpha_i=\sigma(q_i-b_e)$, where $\sigma$ is the logistic sigmoid.

\newpage
\begin{proposition}[Finite-grid probability and membership bounds]
\label{prop:final-output-bounds}
At each measurement pixel, the final full-softmax probability satisfies
\begin{equation}
 |\alpha_i^e-\bar\alpha_i|
 \le \tanh\!\left(\frac{|\varepsilon_i|}{4}\right)
 \le \frac{|\varepsilon_i|}{4}.
 \label{eq:final-probability-bound}
\end{equation}
The actual and reference scenes assign the same target membership whenever
\begin{equation}
 |q_i-m_e|>|\varepsilon_i|.
 \label{eq:hard-membership-condition}
\end{equation}
Let $A$ and $I$ be their respective argmax target masks on domain $D$, and
let $U=\{\mathbf u\in D:|q_i-m_e|\le|\varepsilon_i|\}$.
For $|A|+|I|>0$, their overlap satisfies
\begin{equation}
 \mathrm{F1}(A,I)\ge
 \max\!\left\{0,1-\frac{|U\cap(A\cup I)|}{|A|+|I|}\right\}.
 \label{eq:membership-f1-bound}
\end{equation}
\end{proposition}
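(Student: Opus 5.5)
The plan is to reduce everything to a one-dimensional statement about the logistic function. At a fixed measurement pixel, the full-softmax target probability depends on the competing logits only through $b_e=\logsumexp_{j\ne i}\ell_j^e$:
\[
\alpha_i^e=\frac{e^{\ell_i^e}}{e^{\ell_i^e}+e^{b_e}}=\sigma(\ell_i^e-b_e).
\]
The reference scene keeps the same competitors, so $\bar\alpha_i=\sigma(q_i-b_e)$. Writing $x=q_i-b_e$, the two probabilities are $\sigma(x+\varepsilon_i)$ and $\sigma(x)$, evaluated at the same competitor aggregate. This removes all cross-slot structure, which is exactly why the reference scene was defined with the edited competitors.

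For Eq.~\eqref{eq:final-probability-bound}, I will bound $\sup_x|\sigma(x+\varepsilon)-\sigma(x)|$. Using $\sigma(x)=\tfrac12(1+\tanh(x/2))$, the difference equals $\tfrac12[\tanh((x+\varepsilon)/2)-\tanh(x/2)]$. By symmetry of the increment, the supremum over $x$ is attained at the symmetric point $x=-\varepsilon/2$. One can check this either by setting the derivative $\sigma'(x+\varepsilon)-\sigma'(x)$ to zero, which forces $x+\varepsilon=-x$ by evenness and unimodality of $\sigma'$, or by concavity arguments. At that point the difference is $\tfrac12[\tanh(\varepsilon/4)-\tanh(-\varepsilon/4)]=\tanh(|\varepsilon|/4)$. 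The second inequality is simply $\tanh y\le y$ for $y\ge0$. This maximization is the main step needing care; I would present it via the derivative argument, noting that $\sigma'$ is even and strictly decreasing in $|x|$.

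For membership, note that slot $i$ owns the pixel under argmax iff its logit exceeds $m_e=\max_{j\ne i}\ell_j^e$. In the actual scene the relevant logit is $q_i+\varepsilon_i$, and in the reference scene it is $q_i$. If $|q_i-m_e|>|\varepsilon_i|$, then $q_i-m_e$ and $q_i+\varepsilon_i-m_e$ have the same strict sign, so membership agrees. Ties are measure-zero and can be resolved by the same tie-breaking rule in both scenes. This gives Eq.~\eqref{eq:hard-membership-condition}.

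For the F1 bound, the previous step shows $A\triangle I\subseteq U$, and trivially $A\triangle I\subseteq A\cup I$, so $|A\triangle I|\le|U\cap(A\cup I)|$. Since
\[
2|A\cap I|=|A|+|I|-|A\triangle I|,
\]
we get
\[
\mathrm{F1}=1-\frac{|A\triangle I|}{|A|+|I|}\ge 1-\frac{|U\cap(A\cup I)|}{|A|+|I|}.
\]
Combining this with $\mathrm{F1}\ge0$ yields Eq.~\eqref{eq:membership-f1-bound}.
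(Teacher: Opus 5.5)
Your proposal is correct and follows the paper's proof step for step. You write $\alpha_i^e=\sigma(q_i-b_e+\varepsilon_i)$ against $\bar\alpha_i=\sigma(q_i-b_e)$, bound the logistic increment by $\tanh(|\varepsilon_i|/4)$, use sign preservation of $q_i-m_e$ to get $A\triangle I\subseteq U$, and substitute into $\mathrm{F1}=1-|A\triangle I|/(|A|+|I|)$. You also supply the maximization at $x=-\varepsilon_i/2$, which the paper only asserts. Your tie remark is unnecessary: the strict condition $|q_i-m_e|>|\varepsilon_i|$ already rules out ties in both scenes, and on a finite grid ``measure-zero'' would not be a valid justification anyway.
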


\begin{proof}
Full softmax gives $\alpha_i^e=\sigma(q_i-b_e+\varepsilon_i)$.
For any displacement $h$, the maximum of
$|\sigma(x+h)-\sigma(x)|$ over $x$ is $\tanh(|h|/4)$;
the derivative bound $\sigma'(x)\le1/4$ gives the second inequality.
Condition~\eqref{eq:hard-membership-condition} preserves the sign of the
target logit's difference from the largest competitor. Thus
$A\mathbin{\triangle}I\subseteq U$, and
$\mathrm{F1}(A,I)=1-|A\mathbin{\triangle}I|/(|A|+|I|)$ yields
Eq.~\eqref{eq:membership-f1-bound}.
\end{proof}

The residual includes the complete final-logit path. In particular, if
$\ell_i=r_i+c_i$ separates raw logits and final corrections, then
$\varepsilon_i=(r_i^e-Pr_i^f)+(c_i^e-Pc_i^f)$.
For comparison with a transported factual probability, define
$b_f=\logsumexp_{j\ne i}\ell_j^f$, $\Delta b=b_e-Pb_f$, and
$J_i=|\sigma(q_i-Pb_f)-P\alpha_i^f|$. The same argument and the triangle
inequality give
\begin{equation}
 |\alpha_i^e-P\alpha_i^f|
 \le \tanh\!\left(\frac{|\varepsilon_i-\Delta b|}{4}\right)+J_i,
 \label{eq:transported-probability-bound}
\end{equation}
which explicitly accounts for scene competition and bilinear interpolation.

\paragraph{Measurements.}
We evaluate these relations on the same objects and commands using the
extended computational canvas and original measurement domain above.
Final probabilities include the Obj3D selector, evaluated on the original
canvas with the corresponding cropped decoder features and attention.
Table~\ref{tab:final-output-diagnostics} also reports mean probability RMSE,
target-membership F1, and the mean per-intervention lower bound from
Eq.~\eqref{eq:membership-f1-bound}. The probability inequalities hold to
numerical precision, and all pixels satisfying
Eq.~\eqref{eq:hard-membership-condition} retain target membership.

\addtocontents{toc}{\protect\setcounter{tocdepth}{3}}


\begin{thebibliography}{34}
\providecommand{\natexlab}[1]{#1}

\bibitem[{Aydemir, Xie, and G{\"u}ney(2023)}]{solv}
Aydemir, G.; Xie, W.; and G{\"u}ney, F. 2023.
\newblock Self-supervised Object-Centric Learning for Videos.
\newblock In \emph{Advances in Neural Information Processing Systems
  (NeurIPS)}.

\bibitem[{Biza et~al.(2023)Biza, van Steenkiste, Sajjadi, Elsayed, Mahendran,
  and Kipf}]{isa}
Biza, O.; van Steenkiste, S.; Sajjadi, M. S.~M.; Elsayed, G.~F.; Mahendran, A.;
  and Kipf, T. 2023.
\newblock Invariant Slot Attention: Object Discovery with Slot-Centric
  Reference Frames.
\newblock In \emph{International Conference on Machine Learning (ICML)}.

\bibitem[{Burgess et~al.(2019)Burgess, Matthey, Watters, Kabra, Higgins,
  Botvinick, and Lerchner}]{monet}
Burgess, C.~P.; Matthey, L.; Watters, N.; Kabra, R.; Higgins, I.; Botvinick,
  M.; and Lerchner, A. 2019.
\newblock {MONet}: Unsupervised Scene Decomposition and Representation.
\newblock \emph{arXiv preprint arXiv:1901.11390}.

\bibitem[{Chen et~al.(2025)Chen, Huang, Shen, Huang, Li, and Xue}]{gold}
Chen, T.; Huang, Y.; Shen, Z.; Huang, J.; Li, B.; and Xue, X. 2025.
\newblock Learning Global Object-Centric Representations via Disentangled Slot
  Attention.
\newblock \emph{Machine Learning}, 114(2): 40.

\bibitem[{Cho et~al.(2014)Cho, van Merri{\"e}nboer, Gulcehre, Bahdanau,
  Bougares, Schwenk, and Bengio}]{gru}
Cho, K.; van Merri{\"e}nboer, B.; Gulcehre, C.; Bahdanau, D.; Bougares, F.;
  Schwenk, H.; and Bengio, Y. 2014.
\newblock Learning Phrase Representations using RNN Encoder-Decoder for
  Statistical Machine Translation.
\newblock In \emph{Conference on Empirical Methods in Natural Language
  Processing (EMNLP)}.

\bibitem[{Didolkar et~al.(2025)Didolkar, Zadaianchuk, Awal, Seitzer, Gavves,
  and Agrawal}]{ctrlo}
Didolkar, A.; Zadaianchuk, A.; Awal, R.; Seitzer, M.; Gavves, E.; and Agrawal,
  A. 2025.
\newblock {CTRL-O}: Language-Controllable Object-Centric Visual Representation
  Learning.
\newblock In \emph{IEEE/CVF Conference on Computer Vision and Pattern
  Recognition (CVPR)}, 29523--29533.

\bibitem[{Engelcke et~al.(2020)Engelcke, Kosiorek, Parker~Jones, and
  Posner}]{genesis}
Engelcke, M.; Kosiorek, A.~R.; Parker~Jones, O.; and Posner, I. 2020.
\newblock GENESIS: Generative Scene Inference and Sampling with Object-Centric
  Latent Representations.
\newblock In \emph{International Conference on Learning Representations
  (ICLR)}.

\bibitem[{Eslami et~al.(2016)Eslami, Heess, Weber, Tassa, Szepesvari,
  Kavukcuoglu, and Hinton}]{air}
Eslami, S. M.~A.; Heess, N.; Weber, T.; Tassa, Y.; Szepesvari, D.; Kavukcuoglu,
  K.; and Hinton, G.~E. 2016.
\newblock Attend, Infer, Repeat: Fast Scene Understanding with Generative
  Models.
\newblock In \emph{Advances in Neural Information Processing Systems
  (NeurIPS)}.

\bibitem[{Greff et~al.(2022)Greff, Belletti, Beyer, Doersch et~al.}]{kubric}
Greff, K.; Belletti, F.; Beyer, L.; Doersch, C.; et~al. 2022.
\newblock Kubric: A scalable dataset generator.
\newblock In \emph{IEEE/CVF Conference on Computer Vision and Pattern
  Recognition (CVPR)}.

\bibitem[{Jiang et~al.(2020)Jiang, Janghorbani, de~Melo, and Ahn}]{scalor}
Jiang, J.; Janghorbani, S.; de~Melo, G.; and Ahn, S. 2020.
\newblock {SCALOR}: Generative World Models with Scalable Object
  Representations.
\newblock In \emph{International Conference on Learning Representations
  (ICLR)}.

\bibitem[{Kim et~al.(2023)Kim, Choi, Kang, Lee, Choi, and Kim}]{slotaug}
Kim, J.; Choi, J.; Kang, J.; Lee, C.; Choi, H.-J.; and Kim, S.~J. 2023.
\newblock Leveraging Image Augmentation for Object Manipulation: Towards
  Interpretable Controllability in Object-Centric Learning.
\newblock \emph{arXiv preprint arXiv:2310.08929}.

\bibitem[{Kipf et~al.(2022)Kipf, Elsayed, Mahendran, Stone, Sabour, Heigold,
  Jonschkowski, Dosovitskiy, and Greff}]{savi}
Kipf, T.; Elsayed, G.~F.; Mahendran, A.; Stone, A.; Sabour, S.; Heigold, G.;
  Jonschkowski, R.; Dosovitskiy, A.; and Greff, K. 2022.
\newblock Conditional Object-Centric Learning from Video.
\newblock In \emph{International Conference on Learning Representations
  (ICLR)}.

\bibitem[{Li et~al.(2025)Li, Ren, Liu, and Sun}]{statm}
Li, J.; Ren, P.; Liu, Y.; and Sun, H. 2025.
\newblock Reasoning-Enhanced Object-Centric Learning for Videos.
\newblock In \emph{Proceedings of the 31st ACM SIGKDD Conference on Knowledge
  Discovery and Data Mining V.1}, 659--670. Association for Computing
  Machinery.

\bibitem[{Lin et~al.(2020{\natexlab{a}})Lin, Wu, Peri, Fu, Jiang, and
  Ahn}]{gswm}
Lin, Z.; Wu, Y.-F.; Peri, S.; Fu, B.; Jiang, J.; and Ahn, S.
  2020{\natexlab{a}}.
\newblock Improving Generative Imagination in Object-Centric World Models.
\newblock In \emph{International Conference on Machine Learning (ICML)}.

\bibitem[{Lin et~al.(2020{\natexlab{b}})Lin, Wu, Peri, Sun, Singh, Deng, Jiang,
  and Ahn}]{space}
Lin, Z.; Wu, Y.-F.; Peri, S.~V.; Sun, W.; Singh, G.; Deng, F.; Jiang, J.; and
  Ahn, S. 2020{\natexlab{b}}.
\newblock {SPACE}: Unsupervised Object-Oriented Scene Representation via
  Spatial Attention and Decomposition.
\newblock In \emph{International Conference on Learning Representations
  (ICLR)}.

\bibitem[{Locatello et~al.(2020)Locatello, Weissenborn, Unterthiner, Mahendran,
  Heigold, Uszkoreit, Dosovitskiy, and Kipf}]{slotattention}
Locatello, F.; Weissenborn, D.; Unterthiner, T.; Mahendran, A.; Heigold, G.;
  Uszkoreit, J.; Dosovitskiy, A.; and Kipf, T. 2020.
\newblock Object-Centric Learning with Slot Attention.
\newblock In \emph{Advances in Neural Information Processing Systems
  (NeurIPS)}.

\bibitem[{Loshchilov and Hutter(2019)}]{adamw}
Loshchilov, I.; and Hutter, F. 2019.
\newblock Decoupled Weight Decay Regularization.
\newblock In \emph{International Conference on Learning Representations
  (ICLR)}.

\bibitem[{Majellaro et~al.(2025)Majellaro, Collu, Plaat, and Moerland}]{disa}
Majellaro, R.; Collu, J.; Plaat, A.; and Moerland, T.~M. 2025.
\newblock Explicitly Disentangled Representations in Object-Centric Learning.
\newblock \emph{Transactions on Machine Learning Research}.

\bibitem[{Manasyan et~al.(2025)Manasyan, Seitzer, Radovic, Martius, and
  Zadaianchuk}]{slotcontrast}
Manasyan, A.; Seitzer, M.; Radovic, F.; Martius, G.; and Zadaianchuk, A. 2025.
\newblock Temporally Consistent Object-Centric Learning by Contrasting Slots.
\newblock In \emph{IEEE/CVF Conference on Computer Vision and Pattern
  Recognition (CVPR)}, 5401--5411.

\bibitem[{Moon and Heo(2026)}]{ssync}
Moon, W.; and Heo, J.-P. 2026.
\newblock Selective Synergistic Learning for Video Object-Centric Learning.
\newblock \emph{arXiv preprint arXiv:2606.15527}.

\bibitem[{Moon, Seong, and Heo(2026)}]{slotcurri}
Moon, W.; Seong, H.~S.; and Heo, J.-P. 2026.
\newblock Reconstruction-Guided Slot Curriculum: Addressing Object
  Over-Fragmentation in Video Object-Centric Learning.
\newblock In \emph{IEEE/CVF Conference on Computer Vision and Pattern
  Recognition (CVPR)}, 25001--25010.

\bibitem[{Oquab et~al.(2024)Oquab, Darcet, Moutakanni, Vo, Szafraniec,
  Khalidov, Fernandez, Haziza, Massa, El-Nouby, Assran, Ballas, Galuba, Howes,
  Huang, Li, Misra, Rabbat, Sharma, Synnaeve, Xu, Jegou, Mairal, Labatut,
  Joulin, and Bojanowski}]{dinov2}
Oquab, M.; Darcet, T.; Moutakanni, T.; Vo, H.; Szafraniec, M.; Khalidov, V.;
  Fernandez, P.; Haziza, D.; Massa, F.; El-Nouby, A.; Assran, M.; Ballas, N.;
  Galuba, W.; Howes, R.; Huang, P.-Y.; Li, S.-W.; Misra, I.; Rabbat, M.;
  Sharma, V.; Synnaeve, G.; Xu, H.; Jegou, H.; Mairal, J.; Labatut, P.; Joulin,
  A.; and Bojanowski, P. 2024.
\newblock {DINOv2}: Learning Robust Visual Features without Supervision.
\newblock \emph{Transactions on Machine Learning Research}.

\bibitem[{Seitzer et~al.(2023)Seitzer, Horn, Zadaianchuk, Zietlow, Xiao,
  Simon-Gabriel, He, Zhang, Sch{\"o}lkopf, Brox, and Locatello}]{dinosaur}
Seitzer, M.; Horn, M.; Zadaianchuk, A.; Zietlow, D.; Xiao, T.; Simon-Gabriel,
  C.-J.; He, T.; Zhang, Z.; Sch{\"o}lkopf, B.; Brox, T.; and Locatello, F.
  2023.
\newblock Bridging the Gap to Real-World Object-Centric Learning.
\newblock In \emph{International Conference on Learning Representations
  (ICLR)}.

\bibitem[{Seong, Moon, and Heo(2026)}]{srl}
Seong, H.~S.; Moon, W.; and Heo, J.-P. 2026.
\newblock From Vicious to Virtuous Cycles: Synergistic Representation Learning
  for Unsupervised Video Object-Centric Learning.
\newblock In \emph{International Conference on Learning Representations
  (ICLR)}.

\bibitem[{Sosnovik, Moskalev, and Smeulders(2021)}]{disco}
Sosnovik, I.; Moskalev, A.; and Smeulders, A. W.~M. 2021.
\newblock {DISCO}: Accurate Discrete Scale Convolutions.
\newblock In \emph{British Machine Vision Conference (BMVC)}.

\bibitem[{Sosnovik, Szmaja, and Smeulders(2020)}]{sesn}
Sosnovik, I.; Szmaja, M.; and Smeulders, A. W.~M. 2020.
\newblock Scale-Equivariant Steerable Networks.
\newblock In \emph{International Conference on Learning Representations
  (ICLR)}.

\bibitem[{Tian et~al.(2025)Tian, Yang, Yu, and Kot}]{foregroundocl}
Tian, P.; Yang, S.; Yu, H.; and Kot, A.~C. 2025.
\newblock Pay Attention to the Foreground in Object-Centric Learning.
\newblock In \emph{IEEE/CVF Conference on Computer Vision and Pattern
  Recognition (CVPR)}, 30281--30290.

\bibitem[{Tran et~al.(2026)Tran, Nguyen, Vo, Vo, and Le}]{dssa}
Tran, S.; Nguyen, D.; Vo, H.; Vo, K.; and Le, N. 2026.
\newblock Dual-State Slot Attention: Decoupling Appearance and Identity for
  Video Object-Centric Learning.
\newblock \emph{arXiv preprint arXiv:2606.12601}.

\bibitem[{Vaswani et~al.(2017)Vaswani, Shazeer, Parmar, Uszkoreit, Jones,
  Gomez, Kaiser, and Polosukhin}]{attention}
Vaswani, A.; Shazeer, N.; Parmar, N.; Uszkoreit, J.; Jones, L.; Gomez, A.~N.;
  Kaiser, L.; and Polosukhin, I. 2017.
\newblock Attention Is All You Need.
\newblock \emph{arXiv preprint arXiv:1706.03762}.

\bibitem[{Watters et~al.(2019)Watters, Matthey, Burgess, and
  Lerchner}]{spatialbroadcast}
Watters, N.; Matthey, L.; Burgess, C.~P.; and Lerchner, A. 2019.
\newblock Spatial Broadcast Decoder: A Simple Architecture for Learning
  Disentangled Representations in VAEs.
\newblock \emph{arXiv preprint arXiv:1901.07017}.

\bibitem[{Wei, Nejjar, and Fink(2026)}]{staitus}
Wei, A.; Nejjar, I.; and Fink, O. 2026.
\newblock Rethinking Object-Centric Representations for Video Dynamics
  Modeling.
\newblock \emph{arXiv preprint arXiv:2606.23436}.

\bibitem[{Worrall and Welling(2019)}]{deepscalespaces}
Worrall, D.~E.; and Welling, M. 2019.
\newblock Deep Scale-spaces: Equivariance Over Scale.
\newblock In \emph{Advances in Neural Information Processing Systems
  (NeurIPS)}, volume~32, 7364--7376.

\bibitem[{Zadaianchuk, Seitzer, and Martius(2023)}]{videosaur}
Zadaianchuk, A.; Seitzer, M.; and Martius, G. 2023.
\newblock Object-Centric Learning for Real-World Videos by Predicting Temporal
  Feature Similarities.
\newblock In \emph{Advances in Neural Information Processing Systems
  (NeurIPS)}.

\bibitem[{Zhang et~al.(2017)Zhang, Tang, Zhang, Li, and Yan}]{sac}
Zhang, R.; Tang, S.; Zhang, Y.; Li, J.; and Yan, S. 2017.
\newblock Scale-Adaptive Convolutions for Scene Parsing.
\newblock In \emph{IEEE International Conference on Computer Vision (ICCV)}.

\end{thebibliography}
\end{document}